\documentclass[onefignum,onetabnum]{siamart251216}
\usepackage{amsmath,amssymb,mathtools,bm}
\usepackage{graphicx}
\usepackage{booktabs,multirow,array,tabularx,makecell}
\usepackage{enumitem}
\usepackage{xcolor}
\usepackage{algorithm}
\usepackage{algpseudocode}
\usepackage{placeins}
\usepackage{etoolbox}

\setlist[itemize]{leftmargin=.5in}
\setlist[enumerate]{leftmargin=.5in}

\newsiamremark{remark}{Remark}
\newsiamthm{assumption}{Assumption}

\newcommand{\R}{\mathbb{R}}
\newcommand{\E}{\mathbb{E}}
\newcommand{\KL}{\operatorname{KL}}
\newcommand{\TV}{\operatorname{TV}}
\newcommand{\Ent}{\operatorname{Ent}}

\newcommand{\tr}{\operatorname{tr}}

\newcommand{\dd}{\,\mathrm{d}}
\newcommand{\norm}[1]{\left\lVert #1\right\rVert}

\newcommand{\pibar}{\bar\pi}
\newcommand{\sbar}{\bar s}
\newcommand{\pr}{\mathrm{pr}}
\newcommand{\lik}{\mathrm{lik}}
\newcommand{\alg}{\mathrm{alg}}

\headers{A Posterior-Dynamics Framework}
{Z. Liu, T. Pang, R. Wang, and Y. Zheng}

\title{A Posterior-Dynamics Framework for Imaging Inverse Problems with Pretrained Diffusion Priors}

\author{Zhaoqiang Liu\thanks{School of Computer Science and Engineering, University of Electronic Science and Technology of China, Chengdu 611731, China (Zhaoqiang Liu: \email{zqliu12@gmail.com}; Ruibing Wang: \email{ruibingwang0818@163.com}; Yang Zheng: \email{202511081645@std.uestc.edu.cn}).}
\and Tongyao Pang\thanks{Corresponding author. Yau Mathematical Sciences Center, Shuangqing Complex, Tsinghua University, Beijing 100084, China (\email{typang@tsinghua.edu.cn}).}
\and Ruibing Wang\footnotemark[1]
\and Yang Zheng\footnotemark[1]}

\usepackage{float}

\makeatletter
\renewcommand\paragraph{\punct\@startsection{paragraph}{4}{.25in}%
  {\parskip}%
  {-.5em plus -.1em}%
  {\reset@font\normalsize\bfseries\itshape}}
\makeatother

\graphicspath{{assets/}}
\ifpdf
\hypersetup{
  pdftitle={A Posterior-Dynamics Framework for Imaging Inverse Problems with Pretrained Diffusion Priors},
  pdfauthor={Zhaoqiang Liu, Tongyao Pang, Ruibing Wang, Yang Zheng}
}
\fi

\begin{document}
\maketitle

\begin{abstract}
Pretrained diffusion models represent image distributions through a continuum
of progressively smoothed distributions. This multiscale structure organizes
generation from global structure to fine detail and supports high-quality,
diverse samples. We exploit the same multiscale diffusion prior for linear
imaging inverse problems. Rather than using the pretrained model only as a
denoiser in an outer iteration, we define a surrogate likelihood whose center is
aligned with the clean-image coordinate and whose covariance accounts for
residual diffusion uncertainty. This construction defines an explicit surrogate
posterior path, from which we derive continuous posterior dynamics.
A tunable Langevin component supports target tracking and allows the amount of
posterior exploration to be adapted to the application. We prove endpoint
consistency and a finite-horizon tracking bound and, in the exact-score setting,
first-order weak accuracy. For computation, we derive the Posterior-Dynamics
Implicit--Explicit sampler (PD-IMEX), a stable method using one score evaluation
per diffusion scale and an implicit data-consistency update. Experiments on
deblurring, super-resolution, and inpainting show strong reconstruction
quality at 100 score evaluations, coarse-grid stability, and controllable
fidelity--diversity behavior.
\end{abstract}

\begin{keywords}
diffusion generative priors; imaging inverse problems; posterior dynamics; likelihood continuation; posterior sampling; implicit-explicit methods
\end{keywords}

\begin{MSCcodes}
65J22, 65C30, 62C10, 68T07, 94A08
\end{MSCcodes}

\section{Introduction}
Many imaging tasks, including deblurring, super-resolution, tomography, and accelerated magnetic resonance imaging, can be modeled as linear inverse problems. We consider measurements of the form
\begin{equation}\label{eq:measurement}
 y=Ax+n,
 \qquad A\in\R^{m\times d},
 \qquad n\sim\mathcal N(0,\sigma_d^2I),
\end{equation}
where $x\in\R^d$ is the unknown clean image. Because $A$ is often ill-conditioned or rank deficient, reconstruction depends critically on prior information. Classical models encode smoothness, sparsity, low-rank structure, or nonlocal self-similarity through explicit regularizers \cite{rudin1992,strong2003,chan2000,buades2005,dabov2007,ji2010,gu2014}. Task-specific restoration networks learn stronger regularities from paired, noisy, or self-supervised data \cite{zhang2017dncnn,liang2021swinir,zamir2022restormer,lehtinen2018noise2noise,batson2019noise2self,laine2019self}. Their reconstruction rule, however, is typically tied to the training degradation, data protocol, or task, and can require retraining or adaptation when the sensing operator changes.

Deep generative models provide a distributional alternative. They learn the
statistics of plausible images from clean data and can be reused with different
sensing operators at inference time. This separates the learned image prior from
the explicit physical measurement model and avoids training a new reconstruction
network for every degradation.

Diffusion and closely related continuous-time generative models have achieved
remarkable success in high-resolution image synthesis, producing realistic images
with coherent global structure, fine spatial detail, and broad semantic diversity.
Stable Diffusion \cite{rombach2022ldm} and the flow-matching model FLUX.1 Kontext
\cite{blackforestlabs2025flux} provide compelling empirical evidence that these
models can represent rich, high-dimensional image distributions. In a diffusion
model, generation is organized through a continuum
of progressively Gaussian-smoothed image distributions rather than through the
clean-image distribution alone \cite{song2019,ho2020,song2021}. We call this multi-level
noise-indexed family the \emph{multiscale diffusion prior}: each \emph{diffusion
scale} is a density at a different degree of Gaussian smoothing. At coarse
diffusion scales, the model organizes global structure and alternative modes; at
fine diffusion scales, it progressively resolves texture and local detail. This
coarse-to-fine organization underlies the high fidelity and diversity of
diffusion-based generation.

For inverse problems, this structure suggests conditioning the whole multiscale
diffusion prior rather than using the pretrained network only as an endpoint
denoiser. Coarse diffusion scales can organize global image structure while the
measurement remains uncertain, and finer scales can incorporate increasingly
detailed data consistency. Retaining a distribution at every diffusion scale also
allows multiple data-consistent modes in underdetermined problems
\cite{sun2024provable,wu2024plugplay}. Our objective is to exploit this multiscale
representation to obtain an efficient posterior solver without sacrificing the
diversity supplied by the generative prior.

Accordingly, we formulate reconstruction as Bayesian posterior inference and ask
how the endpoint posterior can be extended across the same diffusion scales:
\begin{equation}\label{eq:posterior}
 p(x\mid y)\propto p_0(x)\exp\!\left(-\frac{\norm{Ax-y}^2}{2\sigma_d^2}\right),
\end{equation}
where $p_0$ is represented implicitly by a diffusion model. A diffusion prior learns the score of progressively smoothed data distributions \cite{song2019,song2021,ho2020}. For a forward process
\begin{equation}\label{eq:general-forward}
 \dd X_t=f(X_t,t)\dd t+g(t)\dd W_t,
\end{equation}
the reverse-time prior dynamics are
\begin{equation}\label{eq:general-reverse}
 \dd X_t=\bigl[f(X_t,t)-g(t)^2\nabla\log p_t(X_t)\bigr]\dd t+g(t)\dd\bar W_t,
 \qquad t:T\to0,
\end{equation}
with the score approximated by a neural network trained by denoising score matching \cite{anderson1982,song2021}.

One principled way to modify the reverse prior dynamics so that their endpoint
law matches the posterior is to tilt the reference path law by the endpoint
likelihood. Let $\mathbb P^{\pr}$ denote the path law induced by
\eqref{eq:general-reverse}. The resulting likelihood-tilted path law admits the
relative-entropy-regularized variational characterization \cite{dupuis1997}
\begin{equation}\label{eq:path-control}
 \mathbb Q_y^\star
 =\operatorname*{arg\,min}_{\mathbb Q\ll\mathbb P^{\pr}}
 \left\{\KL(\mathbb Q\Vert\mathbb P^{\pr})
 -\E_{\mathbb Q}[\log p(y\mid X_0)]\right\},
 \qquad
 \frac{\dd\mathbb Q_y^\star}{\dd\mathbb P^{\pr}}
 =\frac{p(y\mid X_0)}{p(y)}.
\end{equation}
By Bayes' rule, the endpoint marginal under $\mathbb Q_y^\star$ is
$p_0(\cdot\mid y)$. This path-law modification can be realized dynamically by
a Doob transform \cite{doob1984} with potential
\[
 h_t(x)=\E_{\mathbb P^{\pr}}[p(y\mid X_0)\mid X_t=x]=p_t(y\mid x).
\]
Under the decreasing-time convention in \eqref{eq:general-reverse}, the Doob
transform contributes the drift $-g(t)^2\nabla\log h_t$. The exact posterior
dynamics are therefore
\begin{equation}\label{eq:exact-posterior-reverse}
 \dd X_t=\left[f(X_t,t)-g(t)^2\nabla\log p_t(X_t\mid y)\right]\dd t
 +g(t)\dd\bar W_t,
 \qquad t:T\to0.
\end{equation}
Bayes' rule then separates the posterior score in this controlled drift as
\begin{equation}\label{eq:posterior-score-decomp}
 \nabla_{x_t}\log p_t(x_t\mid y)
 =\nabla_{x_t}\log p_t(x_t)+\nabla_{x_t}\log p_t(y\mid x_t).
\end{equation}
The pretrained model supplies the prior score at every diffusion scale, but the
intermediate likelihood score is generally unavailable because
\begin{equation}\label{eq:intermediate-likelihood}
 p_t(y\mid x_t)=\int p(y\mid x_0)p(x_0\mid x_t)\dd x_0.
\end{equation}
Thus the central computational difficulty is to condition the multiscale
diffusion prior, not to learn another clean-image prior. The path-space control
formulation identifies the exact reference dynamics; the remaining task is to
construct a tractable surrogate for its unavailable likelihood term.

Existing methods address the unavailable intermediate likelihood through discrete
guidance corrections, operator-aware projections, unrolled or plug-and-play
reconstruction schemes, and more costly particle or inner Monte Carlo procedures
\cite{chung2022dps,chung2022mcg,song2023pigdm,kawar2022ddrm,zhu2023diffpir,wu2024plugplay,cardoso2023mcgdiff,wu2023asymptotic,zhang2025daps}.
Recent provable methods obtain posterior correctness through either
singular-direction DDIM updates or an outer probability transport equipped with
inner Langevin posterior-score estimation
\cite{jiao2026posteriorDDIM,chang2025provable}. These approaches have different
algorithmic structures. Our goal is to preserve the multiscale diffusion prior as
the central modeling object and derive an explicit posterior path, continuous
dynamics, and numerical solver from one construction.

We work in clean-image coordinates, where scale alignment and likelihood
continuation define a normalized surrogate posterior along the Gaussian smoothing
path of the diffusion prior. Its transport defect motivates a tunable Langevin
component for tracking and exploration. From the same continuous model, we derive
the Posterior-Dynamics Implicit--Explicit sampler (PD-IMEX), which uses one score
evaluation per scale and implicit linear data consistency.

Our main contribution is a multiscale posterior-dynamics framework that turns a
fixed pretrained diffusion model into an efficient and tunable solver for linear
imaging inverse problems. Specifically:
\begin{itemize}
 \item \textbf{Multiscale surrogate posterior construction.} We use the complete
 multiscale diffusion prior rather than treating the pretrained network only as a
 denoising module in an outer reconstruction procedure. Deterministic scale
 alignment places the surrogate likelihood center in the clean-image coordinate,
 while likelihood continuation accounts for the leading projected covariance of
 residual diffusion uncertainty. Together they define a normalized surrogate
 posterior at every diffusion scale and recover the desired Bayesian posterior at
 the endpoint.
 \item \textbf{Posterior dynamics with guarantees and tunable exploration.} We
 derive continuous dynamics consisting of base multiscale transport and a freely
 chosen Langevin component. For the moving surrogate posterior, the Langevin
 intensity supports target tracking and allows the same framework to range from
 nearly deterministic reconstruction to stronger posterior exploration. We
 characterize the path mismatch and establish endpoint consistency and a
 finite-horizon tracking bound.
 \item \textbf{An efficient and stable numerical realization.} We derive
 PD-IMEX, a one-score-evaluation-per-scale discretization that treats the linear
 likelihood dynamics implicitly and adds the Langevin innovation after the
 data-consistency solve. For the exact prior score, we establish first-order weak
 accuracy. Experiments show
 strong reconstruction quality at 100 score evaluations, coarse-grid stability,
 and control over the fidelity--diversity trade-off.
\end{itemize}

\section{Related work and positioning}
\paragraph{Likelihood guidance and operator-aware solvers}
Training-free diffusion inverse solvers commonly modify the unconditional
reverse process using an approximation of the unavailable intermediate
likelihood score at each diffusion scale. DPS evaluates measurement discrepancy
at a Tweedie denoised estimate. Methods based on manifold constraints or
pseudoinverses incorporate local geometry or approximate conditional covariance
information
\cite{chung2022dps,chung2022mcg,song2023pigdm}. TMPD goes beyond a point
approximation by using higher-order Tweedie moments to construct a statistically
motivated approximation of the conditional distribution \cite{boys2024tweedie}.
For structured linear problems, SNIPS and DDRM exploit singular-vector
decompositions to separate measurement-informed and prior-dominated directions,
DDNM decomposes range and null-space components, and DiffPIR alternates diffusion
denoising with explicit data consistency
\cite{kawar2021snips,kawar2022ddrm,wang2022ddnm,zhu2023diffpir}.

\paragraph{Posterior-oriented and annealed inference}
A second line of work places greater emphasis on posterior inference, typically
at additional computational cost. Variational approaches such as RED-Diff
optimize tractable objectives induced by the diffusion prior, while MGPS uses
intermediate-state guidance within a variational posterior-sampling construction
\cite{mardani2023reddiff,moufad2024mgps}. Filtering and sequential Monte Carlo
approaches, including auxiliary-variable methods, approximate the conditional
distribution through particles or inner Monte Carlo procedures rather than a
single reverse-guidance update
\cite{dou2024filtering,cardoso2023mcgdiff,wu2023asymptotic}. PnP-DM treats
the pretrained diffusion model as a plug-and-play prior and reduces Bayesian
inversion to iterative sampling of Gaussian denoising posteriors
\cite{wu2024plugplay}. DAPS instead constructs a decoupled noise-annealing process
and performs inner sampling at each diffusion scale, allowing larger moves and
improved exploration relative to a conventional reverse trajectory
\cite{zhang2025daps}. These approaches improve posterior fidelity or exploration
through additional inference iterations, whereas our focus is a single-loop
dynamics using one score evaluation at each discretized diffusion scale.

\paragraph{Provable diffusion posterior sampling}
Recent work has begun to establish rigorous posterior guarantees for diffusion
inverse solvers. For noisy linear inverse problems, a provable DDIM-type sampler
treats the singular directions of the measurement operator separately: each
component follows the diffusion prior while the measurement signal-to-noise ratio
is low and switches to a calibrated measurement predictor once the observation
becomes sufficiently informative. This gives lightweight coordinate-wise updates
with posterior consistency \cite{jiao2026posteriorDDIM}. For more general Bayesian
inverse problems, an outer probability transport estimates the time-dependent
posterior score through an inner Langevin Monte Carlo procedure with a warm start.
Its nonasymptotic analysis controls initialization and sampling errors in this
nested procedure, including the effects of problem conditioning
\cite{chang2025provable}. These results show that rigorous posterior
sampling can be obtained either by exploiting detailed structure of the
observation model or by investing additional computation in posterior-score
estimation.

\paragraph{Position of this work}
Our distinction lies in using the multiscale diffusion prior as the central object
from which posterior modeling and computation are derived. Many practical methods
introduce data consistency through discrete guidance, unrolled reconstruction, or
plug-and-play iterations, while recent provable methods rely on operator-specific
coordinate updates or inner Monte Carlo score estimation. Our construction is
complementary. We define a normalized surrogate posterior at every diffusion
scale, quantify its mismatch from the base multiscale transport, and derive
continuous transport--Langevin dynamics and the corresponding PD-IMEX
discretization from the same posterior path. This gives a single-loop,
one-score-per-scale solver
while retaining a tunable Langevin intensity for posterior tracking and
exploration. 

\section{Posterior dynamics from a multiscale diffusion prior}
The multiscale diffusion prior introduced in Section~1 becomes, in clean-image
coordinates, a Gaussian smoothing path. We first condition its endpoint exactly to
identify a reference posterior path and a family of transport--Langevin dynamics.
We then construct a tractable surrogate posterior path across the same diffusion
scales, analyze its tracking error, and derive the PD-IMEX discretization.

\subsection{Exact posterior path and transport--Langevin family}
We begin with the ideal setting in which the posterior score is available.
We identify the unknown signal with the diffusion endpoint $X_0\in\R^d$ and
assume throughout the total-variation analysis that $\sigma_d>0$. The
variance-preserving (VP) forward stochastic differential equation (SDE) is
\begin{equation}\label{eq:vp}
 \dd X_t=-\frac12\beta(t)X_t\dd t+\sqrt{\beta(t)}\dd W_t,
 \qquad X_0\sim p_0,
\end{equation}
where $\beta(t)>0$. Its marginal representation is
\begin{equation}\label{eq:vp-marginal}
 X_t=\sqrt{\alpha_t}X_0+\sigma_t\varepsilon,
 \qquad \alpha_t=\exp\!\left(-\int_0^t\beta(s)\dd s\right),
 \qquad \sigma_t^2=1-\alpha_t.
\end{equation}
The diffusion state contains both deterministic attenuation and additive Gaussian
noise. For the exact posterior path, it is convenient to remove only the known
attenuation and use the remaining noise-to-signal ratio as time. We therefore
introduce the clean state and log-SNR time
\begin{equation}\label{eq:clean-coordinate}
 Z_\lambda:=\frac{X_t}{\sqrt{\alpha_t}}
 =X_0+\tau_\lambda\varepsilon,
 \qquad \tau_\lambda=e^{-\lambda},
 \qquad q_\lambda:=\tau_\lambda^2=e^{-2\lambda},
 \qquad \lambda=\frac12\log\frac{\alpha_t}{1-\alpha_t}.
\end{equation}
This is an exact change of variables, not yet an approximation of the
likelihood. In this coordinate, every marginal is an additive Gaussian
perturbation of the same clean variable, with uncertainty $q_\lambda$. The
log-SNR clock makes the reverse direction monotone and gives
$\partial_\lambda q_\lambda=-2q_\lambda$. Thus increasing $\lambda$ follows the
reverse generative direction from coarse to fine diffusion scales. Let $P_q$ denote
convolution with $\mathcal N(0,qI)$. The unconditional clean-coordinate density
is
\begin{equation}\label{eq:pi-lambda}
 \pi_\lambda=P_{q_\lambda}p_0.
\end{equation}
The family $\{\pi_\lambda\}_\lambda$ is the clean-coordinate representation of
the multiscale diffusion prior.
Conditioning the initial law on $y$ gives the exact posterior path
\begin{equation}\label{eq:exact-posterior-path}
 \pi_\lambda^y=P_{q_\lambda}p_0(\cdot\mid y).
\end{equation}
Accordingly, $\{\pi_\lambda^y\}_\lambda$ is the exact posterior path across the
same diffusion scales.
For every finite $\lambda$, Gaussian smoothing makes $\pi_\lambda^y$ strictly positive and smooth, even when the endpoint posterior is only a Borel probability measure.

Under the change of variables in \eqref{eq:clean-coordinate}, the exact posterior
dynamics \eqref{eq:exact-posterior-reverse} become
\[
 \dd Z_\lambda
 =2q_\lambda\nabla\log\pi_\lambda^y(Z_\lambda)\dd\lambda
 +\sqrt{2q_\lambda}\dd W_\lambda.
\]
The marginal path \eqref{eq:exact-posterior-path}, however, does not uniquely
determine the pathwise dynamics. Let $a_\lambda\ge0$ denote the Langevin
intensity, which controls the strength of a matched Langevin score drift and
Brownian diffusion. The following proposition gives the resulting family with
the same marginals; the dynamics above are recovered when $a_\lambda=q_\lambda$.

\begin{proposition}[Exact posterior family with arbitrary Langevin intensity]\label{prop:exact-family}
Let $a_\lambda\ge0$ be locally bounded and suppose the SDE and its Fokker--Planck equation are well posed on every compact interval $[\lambda_0,\Lambda]$. Consider
\begin{equation}\label{eq:exact-posterior-sde}
 \dd Z_\lambda
 =(q_\lambda+a_\lambda)\nabla\log\pi_\lambda^y(Z_\lambda)\dd\lambda
 +\sqrt{2a_\lambda}\dd W_\lambda,
 \qquad Z_{\lambda_0}\sim\pi_{\lambda_0}^y.
\end{equation}
Then $Z_\lambda\sim\pi_\lambda^y$ for all $\lambda\ge\lambda_0$. Moreover, $\pi_\lambda^y\Rightarrow p_0(\cdot\mid y)$ as $\lambda\to\infty$.
\end{proposition}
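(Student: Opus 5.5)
The plan is to show that the marginal path $\pi_\lambda^y$ solves the Fokker--Planck equation of \eqref{eq:exact-posterior-sde}, and then to transfer this to the law of $Z_\lambda$ using the assumed uniqueness.

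\textbf{Step 1: the heat equation in $\lambda$.} Since $\pi_\lambda^y=P_{q_\lambda}p_0(\cdot\mid y)$, the Gaussian semigroup satisfies $\partial_q P_q\mu=\tfrac12\Delta P_q\mu$. This holds classically for every $q>0$, even when $\mu=p_0(\cdot\mid y)$ is only a Borel probability measure, because the Gaussian kernel is smooth. With $\partial_\lambda q_\lambda=-2q_\lambda$ the chain rule gives
\[
 \partial_\lambda\pi_\lambda^y=-q_\lambda\Delta\pi_\lambda^y .
\]

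\textbf{Step 2: check the Fokker--Planck equation.} The Fokker--Planck operator of \eqref{eq:exact-posterior-sde} acting on a density $\rho$ is
\[
 -\nabla\cdot\bigl((q_\lambda+a_\lambda)\rho\nabla\log\pi_\lambda^y\bigr)+a_\lambda\Delta\rho .
\]
Insert $\rho=\pi_\lambda^y$ and use $\pi\nabla\log\pi=\nabla\pi$. The drift term becomes $-(q_\lambda+a_\lambda)\Delta\pi_\lambda^y$. The $a_\lambda$ parts cancel exactly, which is why the intensity is arbitrary. What remains is $-q_\lambda\Delta\pi_\lambda^y=\partial_\lambda\pi_\lambda^y$ by Step 1.

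\textbf{Step 3: identify the law of $Z_\lambda$.} By hypothesis the Fokker--Planck equation is well posed on each $[\lambda_0,\Lambda]$. The law of $Z_\lambda$ is a weak (distributional) solution with initial datum $\pi_{\lambda_0}^y$, and so is $\pi_\lambda^y$. Uniqueness then gives $\mathrm{Law}(Z_\lambda)=\pi_\lambda^y$ for all $\lambda\ge\lambda_0$.

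To apply this, I would phrase Step 2 weakly. For $\varphi\in C_c^\infty$, write $\tfrac{d}{d\lambda}\int\varphi\,\pi_\lambda^y$ and integrate by parts. This is justified because $\pi_\lambda^y$ and its derivatives have Gaussian-dominated tails, locally uniformly in $\lambda$ (with $q_\lambda$ bounded away from $0$ on compacts), and $a_\lambda$ is locally bounded, so the integrals are locally integrable in $\lambda$.

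\textbf{Main obstacle: the score may be unbounded.} The score $\nabla\log\pi_\lambda^y$ can be unbounded, so the weak formulation needs $\nabla\pi_\lambda^y$ to be integrable. This holds since $\nabla\pi=P_q\mu*$(gradient of the Gaussian kernel), which gives $\|\nabla\pi\|_{L^1}\le C/\sqrt q$. Well-posedness itself is assumed, so no further work is needed there.

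\textbf{Step 4: the limit $\lambda\to\infty$.} As $\lambda\to\infty$ we have $q_\lambda\to0$. The variable $Z=X_0+\tau_\lambda\varepsilon$ with $X_0\sim p_0(\cdot\mid y)$ and $\tau_\lambda\varepsilon\to0$ almost surely. Hence $Z\Rightarrow X_0$, that is, $\pi_\lambda^y\Rightarrow p_0(\cdot\mid y)$. Equivalently, the characteristic function $\hat\mu(\xi)e^{-q_\lambda|\xi|^2/2}\to\hat\mu(\xi)$ pointwise, and Lévy's continuity theorem gives the same conclusion.
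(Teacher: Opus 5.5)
Your proposal is correct and follows the paper's proof: the chain-rule identity $\partial_\lambda\pi_\lambda^y=-q_\lambda\Delta\pi_\lambda^y$, substitution of $\pi_\lambda^y$ into the Fokker--Planck equation so that the $a_\lambda$ terms cancel, the assumed uniqueness, and continuity of the Gaussian semigroup as $q_\lambda\to0$. Your coupling and L\'evy arguments simply make this last step explicit.

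One minor correction: $\pi_\lambda^y$ need not have Gaussian-dominated tails, since the posterior can be heavy-tailed along $\ker A$. Your weak formulation with $\varphi\in C_c^\infty$ does not need tail decay, however. It suffices that $|\partial^\alpha\pi_\lambda^y|\le\sup|\partial^\alpha G_{q_\lambda}|$, where $G_{q_\lambda}$ is the Gaussian kernel, and this bound holds locally uniformly in $\lambda$.
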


The proof is given in \cref{app:exact}. The key identity is
\begin{equation}\label{eq:backward-heat}
 \partial_\lambda\pi_\lambda^y=-q_\lambda\Delta\pi_\lambda^y.
\end{equation}
When the candidate density in the Fokker--Planck equation of \eqref{eq:exact-posterior-sde} is set equal to $\pi_\lambda^y$, the $a_\lambda$-dependent drift and diffusion cancel exactly, leaving \eqref{eq:backward-heat}.

\begin{remark}[Relative Langevin intensity]\label{rem:zeta}
Choosing $a_\lambda=\zeta q_\lambda$, with $\zeta\ge0$, parameterizes the
Langevin component relative to the multiscale transport coefficient $q_\lambda$.
It gives
\[
 \dd Z_\lambda=(1+\zeta)q_\lambda\nabla\log\pi_\lambda^y(Z_\lambda)\dd\lambda
 +\sqrt{2\zeta q_\lambda}\dd W_\lambda.
\]
Thus $\zeta=0$ is probability-flow transport and $\zeta=1$ is the usual reverse
SDE in the clean coordinate. Values $\zeta>1$ are also admissible, and a general
$a_\lambda$ permits a time-dependent Langevin-intensity schedule. In the ideal
model, varying $\zeta$ changes pathwise exploration without changing the
prescribed posterior marginals. For the moving surrogate posterior introduced
below, the same Langevin component additionally supplies entropy dissipation and
therefore acts as a tracking mechanism.
\end{remark}

\subsection{Surrogate posterior path across diffusion scales}
\paragraph{Scale-consistent likelihood center}
The exact family is not directly implementable because an unconditional diffusion
model supplies the prior score but not the intermediate likelihood in
\eqref{eq:intermediate-likelihood}. Write the clean likelihood, up to its
normalizing constant, as
\[
 \ell(x_0):=\exp\!\left[-\frac{\norm{Ax_0-y}^2}{2\sigma_d^2}\right].
\]
The exact quantity is the conditional average
\[
 p_t(y\mid x_t)\propto\E[\ell(X_0)\mid X_t=x_t],
 \qquad
 X_0=\frac{x_t}{\sqrt{\alpha_t}}
 -\frac{\sigma_t}{\sqrt{\alpha_t}}\varepsilon.
\]
This decomposition separates two effects of the diffusion scale on the
intermediate likelihood. The deterministic attenuation determines where the
clean-image forward model should be evaluated, while the additive term represents
residual uncertainty about the clean image. Evaluating the latter exactly requires
the conditional law of the additive correction, including its state-dependent
mean, covariance, and higher moments. We instead approximate the two effects in
stages. First, we retain only the deterministic scaling of the diffusion channel
and drop the additive correction:
\[
 X_0\approx\frac{x_t}{\sqrt{\alpha_t}},
 \qquad
 p_t(y\mid x_t)\approx
 \ell\!\left(\frac{x_t}{\sqrt{\alpha_t}}\right).
\]
Here scale consistency refers specifically to the likelihood center: the known
attenuation is removed before applying the clean-image forward model. This
zeroth-order approximation does not yet account for uncertainty in the discarded
additive term. It gives
\begin{equation}\label{eq:scale-consistent-likelihood}
 L_t^{\mathrm{sc}}(x_t)
 :=\exp\!\left[-\frac{1}{2\sigma_d^2}
 \norm{A\frac{x_t}{\sqrt{\alpha_t}}-y}^2\right].
\end{equation}
The chain rule gives
\begin{equation}\label{eq:scale-consistent-score}
 \nabla_{x_t}\log L_t^{\mathrm{sc}}(x_t)
 =-\frac{1}{\sqrt{\alpha_t}\sigma_d^2}
 A^\top\!\left(A\frac{x_t}{\sqrt{\alpha_t}}-y\right).
\end{equation}
The approximation becomes the clean likelihood at the data endpoint, but at a
finite diffusion scale it remains only a surrogate for the conditional average.

In the $z$ coordinate, the baseline surrogate posterior is
\begin{equation}\label{eq:baseline-target}
 \widetilde\pi_\lambda^{\mathrm{sc}}(z\mid y)
 \propto \pi_\lambda(z)
 \exp\!\left[-\frac{\norm{Az-y}^2}{2\sigma_d^2}\right].
\end{equation}
Equation~\eqref{eq:baseline-target} reveals a simple prior-smoothing
structure of the clean-coordinate formulation. The physical likelihood is
the same as that at the data endpoint, while all diffusion-scale dependence
is confined to
\[
\pi_\lambda=P_{q_\lambda}p_0,
\]
the Gaussian-smoothed prior density.  Related prior-side
smoothing ideas appear in proximal optimization and proximal MCMC, where a
possibly nonsmooth prior energy is regularized through its Moreau--Yosida
envelope and the associated proximity operator
\cite{parikhBoyd2014,pereyra2016proximal}.
The analogy is structural rather than exact: proximal methods use Moreau
smoothing of the prior energy, whereas diffusion applies Gaussian smoothing
to the prior density. The $z$-coordinate makes this separation explicit by
removing the deterministic attenuation of the diffusion state and expressing
the baseline surrogate as a fixed physical likelihood multiplied by a
multiscale, Gaussian-smoothed prior.
\paragraph{Likelihood continuation for residual uncertainty}
The baseline surrogate in~\eqref{eq:baseline-target} therefore already
contains a prior-side smoothing path. The likelihood continuation introduced
next serves a different purpose: it accounts for the residual uncertainty of
the clean image conditional on the current diffusion state. Away from the
endpoint,
\[
Z_\lambda=X_0+\tau_\lambda\varepsilon
\]
remains uncertain around $X_0$, so evaluating the endpoint likelihood
directly at $Z_\lambda$ can be overconfident. We therefore retain a leading
covariance approximation of the discarded diffusion perturbation. This motivates the
effective covariance
\begin{equation}\label{eq:Gamma}
 \Gamma_\lambda:=\sigma_d^2I+\eta_\lambda AA^\top,
 \qquad \eta_\lambda\ge0,
 \qquad \eta_\lambda\to0\quad(\lambda\to\infty).
\end{equation}
We call $\eta_\lambda$ the likelihood-continuation schedule. The nominal choice
$\eta_\lambda=q_\lambda$ matches the covariance of the projected isotropic
perturbation. This covariance correction remains a controlled surrogate rather
than the exact state-dependent conditional likelihood. More generally, the
likelihood continuation accounts for the diffusion-scale uncertainty remaining in
$Z_\lambda$ and introduces measurement information progressively as the prior
moves from coarse to fine diffusion scales. The original likelihood is recovered
as $\eta_\lambda\to0$. Define
\begin{equation}\label{eq:Llambda}
 L_\lambda(z)
 :=\exp\!\left[-\frac12(Az-y)^\top\Gamma_\lambda^{-1}(Az-y)\right]
\end{equation}
and the normalized surrogate posterior at diffusion scale $\lambda$
\begin{equation}\label{eq:target}
 \pibar_\lambda(z\mid y)
 :=\frac{1}{\mathcal Z_\lambda}\pi_\lambda(z)L_\lambda(z),
 \qquad \mathcal Z_\lambda:=\int\pi_\lambda(z)L_\lambda(z)\dd z.
\end{equation}
Its score is
\begin{equation}\label{eq:target-score}
 \sbar_\lambda(z;y)
 =s_\lambda^{\pr}(z)+s_\lambda^{\lik}(z)
 =\nabla\log\pi_\lambda(z)-H_\lambda z+b_\lambda,
\end{equation}
where
\begin{equation}\label{eq:Hb}
 H_\lambda:=A^\top\Gamma_\lambda^{-1}A,
 \qquad b_\lambda:=A^\top\Gamma_\lambda^{-1}y.
\end{equation}
Let $t=t(\lambda)$ denote the inverse log-SNR parametrization and write
$\alpha_\lambda:=\alpha_{t(\lambda)}$. We use
$\widehat\varepsilon_\theta(\cdot,\lambda)$ as shorthand for the pretrained
network evaluated at the corresponding diffusion-time label
$t(\lambda)$. In practice,
\begin{equation}\label{eq:score-network}
 s_{\theta,\lambda}^{\pr}(z)
 :=-e^\lambda\widehat\varepsilon_\theta(\sqrt{\alpha_\lambda}z,\lambda)
\end{equation}
approximates $\nabla\log\pi_\lambda(z)$.

The likelihood continuation is useful only if it returns to the desired Bayesian model
as the diffusion noise vanishes. The next proposition verifies this endpoint
property separately for densities and scores. The distinction is important:
density convergence identifies the limiting law, whereas score convergence alone
does not guarantee convergence of normalized probability measures.

\begin{proposition}[Endpoint and score consistency]\label{prop:endpoint}
Assume that $p_0$ is absolutely continuous with an $L^1(\R^d)$ probability density and $\sigma_d>0$. If $\eta_\lambda\to0$, then
\begin{equation}\label{eq:endpoint-TV}
 \norm{\pibar_\lambda(\cdot\mid y)-p(\cdot\mid y)}_{\TV}\to0.
\end{equation}
If, in addition, $p_0$ is strictly positive and $C^1$, and
\[
 \nabla\log(P_{q_\lambda}p_0)\to\nabla\log p_0
 \quad\text{locally uniformly},
\]
then $\sbar_\lambda(\cdot;y)$ converges locally uniformly to the clean-posterior score
\[
 \nabla\log p_0(z)-\sigma_d^{-2}A^\top(Az-y).
\]
\end{proposition}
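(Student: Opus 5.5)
The plan is to prove the total-variation statement by $L^1$ convergence of the unnormalized densities, followed by a normalization argument.

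\textbf{Unnormalized densities.} Set $u_\lambda:=\pi_\lambda L_\lambda$ and $u:=p_0\ell$, where $\ell(z)=\exp[-\norm{Az-y}^2/(2\sigma_d^2)]$. Then $\pibar_\lambda=u_\lambda/\mathcal Z_\lambda$ and $p(\cdot\mid y)=u/\mathcal Z$ with $\mathcal Z=\int u$. Positivity of $\mathcal Z$ follows because $p_0$ is a probability density and $\ell>0$ everywhere, using $\sigma_d>0$. Split
\[
 \norm{u_\lambda-u}_{L^1}\le \norm{(\pi_\lambda-p_0)L_\lambda}_{L^1}+\norm{p_0(L_\lambda-\ell)}_{L^1}.
\]
\begin{itemize}
\item For the first term, use $0<L_\lambda\le1$, since $\Gamma_\lambda^{-1}$ is positive definite. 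This gives the bound $\norm{P_{q_\lambda}p_0-p_0}_{L^1}$, which tends to $0$ because the Gaussian mollifier is an approximate identity in $L^1$ and $q_\lambda\to0$. Here I am assuming $\lambda\to\infty$ is the regime in which $\eta_\lambda\to0$, as in \eqref{eq:Gamma}.
\item For the second term, $\Gamma_\lambda\to\sigma_d^2I$. Matrix inversion is continuous at this invertible limit, so $\Gamma_\lambda^{-1}\to\sigma_d^{-2}I$. Hence $L_\lambda\to\ell$ pointwise, with $|L_\lambda-\ell|\le 2$. Dominated convergence against the integrable function $p_0$ gives convergence to $0$.
\end{itemize}

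\textbf{Normalization.} Consequently $\mathcal Z_\lambda\to\mathcal Z>0$. The standard inequality
\[
 \Bigl\|\tfrac{u_\lambda}{\mathcal Z_\lambda}-\tfrac{u}{\mathcal Z}\Bigr\|_{L^1}\le \frac{2\norm{u_\lambda-u}_{L^1}}{\mathcal Z}
\]
then yields \eqref{eq:endpoint-TV}. It follows from adding and subtracting $u_\lambda/\mathcal Z$ and using $|\mathcal Z_\lambda-\mathcal Z|\le\norm{u_\lambda-u}_{L^1}$. The TV norm is half or all of the $L^1$ distance depending on convention, and either is fine.

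\textbf{Score statement.} By \eqref{eq:target-score}, $\sbar_\lambda(z;y)=\nabla\log\pi_\lambda(z)-H_\lambda z+b_\lambda$. The prior part converges locally uniformly to $\nabla\log p_0$ by hypothesis. For the likelihood part, $H_\lambda=A^\top\Gamma_\lambda^{-1}A\to\sigma_d^{-2}A^\top A$ and $b_\lambda\to\sigma_d^{-2}A^\top y$ in norm, again by continuity of inversion. On a compact set $K$,
\[
 \sup_{z\in K}\norm{(H_\lambda-\sigma_d^{-2}A^\top A)z}\le\norm{H_\lambda-\sigma_d^{-2}A^\top A}\sup_K\norm{z}\to0.
\]
Summing the two pieces gives local uniform convergence to $\nabla\log p_0(z)-\sigma_d^{-2}A^\top(Az-y)$. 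Strict positivity and $C^1$ regularity of $p_0$ are needed only so that the limit score is defined.

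\textbf{Main obstacle.} The only delicate step is the $L^1$ approximate-identity property for merely integrable $p_0$. I would cite it as standard, proving it by density of $C_c$ in $L^1$ together with the continuity of translation in $L^1$. It is important to get this from $L^1$ convergence rather than from pointwise limits: pointwise convergence of $P_{q_\lambda}p_0$ holds only almost everywhere (at Lebesgue points), and Scheffé-type arguments need care. The uniform bound $L_\lambda\le1$ is what lets us avoid any tail or moment assumptions.
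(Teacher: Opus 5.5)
Your proposal is correct and follows essentially the same route as the paper. Both prove $L^1$ convergence of the unnormalized products $\pi_\lambda L_\lambda\to p_0\ell$ using the Gaussian approximate identity and dominated convergence (with $0<L_\lambda\le1$), then normalize, and obtain local uniform score convergence from $\Gamma_\lambda^{-1}\to\sigma_d^{-2}I$. Your splitting $(\pi_\lambda-p_0)L_\lambda+p_0(L_\lambda-\ell)$ and the normalization bound $2\norm{u_\lambda-u}_{L^1}/\mathcal Z$ are slightly tidier than the paper's version, which picks up $2\norm{\pi_\lambda-p_0}_{L^1}$ and invokes $\mathcal Z_\lambda\ge\mathcal Z_\infty/2$, but the argument is the same.
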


The $L^1$ density assumption is essential for total-variation convergence. If the endpoint prior is singular, Gaussian mollifications still converge weakly but need not converge in total variation. The proof of \cref{prop:endpoint} is in \cref{app:endpoint}. Thus the surrogate posterior path has the correct endpoint, but this statement alone says nothing about whether a dynamics can track it.

\paragraph{Transport defect and posterior dynamics}
The remaining issue comes from the way the surrogate was constructed. The exact
posterior path in \eqref{eq:exact-posterior-path} is a Gaussian smoothing of the
joint endpoint density and therefore satisfies the backward heat equation. By
contrast, \eqref{eq:target} multiplies a smoothed prior by a separately continued
likelihood. This product has the correct endpoint but need not follow the same
transport equation. Consequently, simply substituting $\sbar_\lambda$ into the
probability-flow drift does not in general produce marginals $\pibar_\lambda$.
We quantify this continuous-time mismatch before deciding how much correction is
needed.

\begin{definition}[Transport defect]\label{def:defect}
For a positive smooth surrogate posterior path $\{\pibar_\lambda\}$, define
\begin{equation}\label{eq:defect}
 r_\lambda(z)
 :=\frac{\partial_\lambda\pibar_\lambda(z)+q_\lambda\Delta\pibar_\lambda(z)}{\pibar_\lambda(z)}
 =\partial_\lambda\log\pibar_\lambda(z)
 +q_\lambda\Bigl(\Delta\log\pibar_\lambda(z)+\norm{\sbar_\lambda(z)}^2\Bigr).
\end{equation}
\end{definition}

The transport defect measures the mismatch between the surrogate posterior path
and the base multiscale transport inherited from the diffusion prior. Indeed, the
base transport velocity is
\[
 v_\lambda(z)=q_\lambda\nabla\log\pibar_\lambda(z),
\]
and \eqref{eq:defect} is equivalently
\begin{equation}\label{eq:defect-continuity}
 r_\lambda\pibar_\lambda
 =\partial_\lambda\pibar_\lambda
 +\nabla\cdot(\pibar_\lambda v_\lambda).
\end{equation}
Thus $r_\lambda$ is a normalized Fokker--Planck or continuity-equation residual. It is a signed local source term rather than a nonnegative norm. Under sufficient decay,
\begin{equation}\label{eq:defect-centered}
 \int r_\lambda\dd\pibar_\lambda=0.
\end{equation}
Positive and negative values describe where the prescribed surrogate posterior
gains or loses mass faster than the base multiscale transport can provide.

For an observable $F$, define its centered version by
$\mathcal C_{\pibar_\lambda}[F]
:=F-\E_{\pibar_\lambda}[F]$. The following identity applies directly to both
the nominal and practical continuation schedules.

\begin{proposition}[Transport defect for a differentiable likelihood continuation]\label{prop:explicit-defect}
Assume that $\eta_\lambda$ is differentiable and that the required differentiation and integration-by-parts operations are justified. Then
\begin{equation}\label{eq:explicit-defect}
 \begin{aligned}
 r_\lambda
 ={}&2q_\lambda\,
 \mathcal C_{\pibar_\lambda}
 \!\left[s_\lambda^{\pr}\!\cdot s_\lambda^{\lik}\right]\\
 &+\left(q_\lambda+\frac12\dot\eta_\lambda\right)
 \mathcal C_{\pibar_\lambda}
 \!\left[\norm{s_\lambda^{\lik}}^2\right].
 \end{aligned}
\end{equation}
In particular, $\eta_\lambda=q_\lambda$ implies
$\dot\eta_\lambda=-2q_\lambda$, so the second term vanishes.
\end{proposition}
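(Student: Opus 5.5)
The plan is to expand the second form of the defect in \eqref{eq:defect} using the product structure \eqref{eq:target}. The pieces that depend only on the prior will cancel through the backward heat equation. The pieces that depend only on the likelihood are explicit Gaussian quadratics. Every term that does not depend on $z$ is then removed by the centering identity \eqref{eq:defect-centered}.

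Step 1 is the decomposition. Write $\log\pibar_\lambda=\log\pi_\lambda+\log L_\lambda-\log\mathcal Z_\lambda$. Then $\Delta\log\pibar_\lambda=\Delta\log\pi_\lambda+\Delta\log L_\lambda$. Expanding the square of the score \eqref{eq:target-score} gives
\[
\norm{\sbar_\lambda}^2=\norm{s_\lambda^{\pr}}^2+2s_\lambda^{\pr}\cdot s_\lambda^{\lik}+\norm{s_\lambda^{\lik}}^2 .
\]
Substituting into \eqref{eq:defect} groups $r_\lambda$ into four parts: a prior part, a likelihood part, the cross term $2q_\lambda s_\lambda^{\pr}\cdot s_\lambda^{\lik}$, and the constant $-\partial_\lambda\log\mathcal Z_\lambda$.

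Step 2 handles the prior part. Since $\pi_\lambda=P_{q_\lambda}p_0$ and $\partial_\lambda q_\lambda=-2q_\lambda$, the heat semigroup gives $\partial_\lambda\pi_\lambda=-q_\lambda\Delta\pi_\lambda$. This is the same computation behind \eqref{eq:backward-heat}, applied to $p_0$ instead of $p_0(\cdot\mid y)$. Dividing by $\pi_\lambda$ yields
\[
\partial_\lambda\log\pi_\lambda+q_\lambda\bigl(\Delta\log\pi_\lambda+\norm{s_\lambda^{\pr}}^2\bigr)=0,
\]
so the prior part vanishes identically.

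Step 3 handles the likelihood part. Set $u=Az-y$. From $\partial_\lambda\Gamma_\lambda^{-1}=-\Gamma_\lambda^{-1}\dot\eta_\lambda AA^\top\Gamma_\lambda^{-1}$ and $s_\lambda^{\lik}=-A^\top\Gamma_\lambda^{-1}u$, we get
\[
\partial_\lambda\log L_\lambda=\tfrac12\dot\eta_\lambda\norm{A^\top\Gamma_\lambda^{-1}u}^2=\tfrac12\dot\eta_\lambda\norm{s_\lambda^{\lik}}^2 .
\]
Also $\Delta\log L_\lambda=-\tr H_\lambda$, which does not depend on $z$. The likelihood part is therefore $(q_\lambda+\tfrac12\dot\eta_\lambda)\norm{s_\lambda^{\lik}}^2-q_\lambda\tr H_\lambda$.

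Collecting Steps 1--3 gives
\[
r_\lambda=2q_\lambda s_\lambda^{\pr}\cdot s_\lambda^{\lik}+\bigl(q_\lambda+\tfrac12\dot\eta_\lambda\bigr)\norm{s_\lambda^{\lik}}^2+c_\lambda,
\]
where $c_\lambda=-q_\lambda\tr H_\lambda-\partial_\lambda\log\mathcal Z_\lambda$ depends only on $\lambda$. Integrating against $\pibar_\lambda$ and using \eqref{eq:defect-centered} forces $c_\lambda$ to equal minus the $\pibar_\lambda$-expectation of the other two terms. This is exactly the centered form \eqref{eq:explicit-defect}. The special case is immediate: $\eta_\lambda=q_\lambda$ gives $\dot\eta_\lambda=-2q_\lambda$, so the coefficient $q_\lambda+\tfrac12\dot\eta_\lambda$ vanishes.

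The main obstacle is justifying \eqref{eq:defect-centered}, equivalently identifying $\partial_\lambda\log\mathcal Z_\lambda$. This is where the assumed differentiation and integration by parts enter. I would check it directly rather than cite it. Differentiate $\mathcal Z_\lambda=\int\pi_\lambda L_\lambda$ under the integral, replace $\partial_\lambda\pi_\lambda$ by $-q_\lambda\Delta\pi_\lambda$, and integrate by parts twice to move the Laplacian onto $L_\lambda$. The boundary terms drop because $L_\lambda$ is bounded with polynomially growing derivatives and $\pi_\lambda$ decays. Using $\Delta L_\lambda=L_\lambda\bigl(\norm{s_\lambda^{\lik}}^2-\tr H_\lambda\bigr)$ gives
\[
\partial_\lambda\log\mathcal Z_\lambda=\E_{\pibar_\lambda}\Bigl[\tfrac12\dot\eta_\lambda\norm{s_\lambda^{\lik}}^2-q_\lambda\bigl(\norm{s_\lambda^{\lik}}^2-\tr H_\lambda\bigr)\Bigr].
\]
An alternative route is to integrate by parts $\int \pi_\lambda\,\Delta L_\lambda$ once as $-\int\nabla\pi_\lambda\cdot\nabla L_\lambda$, which yields the cross-term expectation $-q_\lambda\,\E_{\pibar_\lambda}[2s^{\pr}_\lambda\cdot s^{\lik}_\lambda]$. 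The two expressions agree by the Gaussian integration-by-parts identity $\E_{\pibar_\lambda}[s^{\pr}_\lambda\cdot s^{\lik}_\lambda]=-\E_{\pibar_\lambda}[\norm{s^{\lik}_\lambda}^2-\tr H_\lambda]$. Either way the constant is consistent with the centering, and the finiteness of these expectations is what the integrability hypothesis must guarantee.
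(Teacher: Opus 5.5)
Your proof is correct and follows essentially the paper's route. You split the defect along the product $\pi_\lambda L_\lambda$, cancel the prior part with the backward heat equation, compute $\partial_\lambda\log L_\lambda=\tfrac12\dot\eta_\lambda\norm{s_\lambda^{\lik}}^2$ and $\Delta\log L_\lambda=-\tr(H_\lambda)$, and let centering absorb the $z$-independent terms $-q_\lambda\tr(H_\lambda)$ and $-\partial_\lambda\log\mathcal Z_\lambda$. There is one small slip in your optional cross-check. A single integration by parts contributes $+q_\lambda\E_{\pibar_\lambda}[s_\lambda^{\pr}\cdot s_\lambda^{\lik}]$ to $\partial_\lambda\log\mathcal Z_\lambda$, not $-2q_\lambda\E_{\pibar_\lambda}[s_\lambda^{\pr}\cdot s_\lambda^{\lik}]$. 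The centering identity also follows more directly from $\int r_\lambda\dd\pibar_\lambda=\partial_\lambda\int\pibar_\lambda\dd z+q_\lambda\int\Delta\pibar_\lambda\dd z=0$.
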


The proof is in \cref{app:defect}. The first term is the prior--likelihood
interaction created because the heat flow does not preserve products. The
second records the mismatch between the likelihood continuation clock and the
nominal variance $q_\lambda$. For the practical schedule
$\eta_\lambda=q_\lambda^2/(q_\lambda+\tau_c^2)$,
\begin{equation}\label{eq:practical-defect-coefficient}
 q_\lambda+\frac12\dot\eta_\lambda
 =\frac{q_\lambda\tau_c^4}{(q_\lambda+\tau_c^2)^2},
\end{equation}
so continuation adds an explicit likelihood self-interaction to the nominal
prior--likelihood defect.

The defect explains why the base multiscale transport alone can be fragile for the
surrogate posterior path. We therefore retain this transport and add a Langevin
component at every $\lambda$. When the diffusion scale is frozen, this component
preserves $\pibar_\lambda$; while the posterior path moves, it supplies the
dissipative mechanism in the tracking estimate and permits exploration of
posterior ambiguity. Whether this dissipation dominates the transport defect
depends on the
functional-inequality and defect-control conditions below. Let $a_\lambda\ge0$
denote the Langevin intensity. The continuous surrogate model is
\begin{equation}\label{eq:surrogate-sde}
 \boxed{
 \dd Z_\lambda
 =(q_\lambda+a_\lambda)\sbar_\lambda(Z_\lambda;y)\dd\lambda
 +\sqrt{2a_\lambda}\dd W_\lambda.}
\end{equation}
It may be decomposed algebraically as
\begin{equation}\label{eq:transport-mixing-decomp}
 \dd Z_\lambda
 =\underbrace{q_\lambda\sbar_\lambda(Z_\lambda;y)\dd\lambda}_{\text{base multiscale transport}}
 +\underbrace{\Bigl[a_\lambda\sbar_\lambda(Z_\lambda;y)\dd\lambda
 +\sqrt{2a_\lambda}\dd W_\lambda\Bigr]}_{\text{Langevin component}}.
\end{equation}
The second bracket has $\pibar_\lambda$ as its invariant law when $\lambda$ is
frozen. Thus the base multiscale transport advances the diffusion scale, while the
Langevin component supports tracking and exploration. Increasing $a_\lambda$
strengthens both effects, while the definition of the surrogate target
$\pibar_\lambda$ itself remains unchanged.

The continuous model is now fully specified. Its convergence requires two
distinct facts. First, the surrogate posterior must approach the clean posterior,
which was established in \cref{prop:endpoint}. Second, the sampler law must track
the moving surrogate posterior despite the defect $r_\lambda$. To address the latter, let
$\mu_\lambda$ denote the law of \eqref{eq:surrogate-sde} and define
\begin{equation}\label{eq:H}
 H(\lambda):=\KL(\mu_\lambda\Vert\pibar_\lambda),
 \qquad
 I(\mu_\lambda\Vert\pibar_\lambda)
 :=\int\norm{\nabla\log(\mu_\lambda/\pibar_\lambda)}^2\dd\mu_\lambda.
\end{equation}
The following theorem shows that the Langevin component produces
relative-Fisher-information dissipation, while the transport defect is the
forcing term generated by motion of the surrogate posterior path. Combining
this tracking estimate with endpoint consistency yields convergence to the
posterior under the stated conditions.

\begin{theorem}[Entropy identity and finite-horizon posterior tracking]\label{thm:continuous}
Assume that the coefficients, surrogate posterior path, and sampler densities are sufficiently regular for the SDE, Fokker--Planck equation, differentiation under the integral, and the integrations by parts below. Then
\begin{equation}\label{eq:entropy-identity}
 H'(\lambda)
 =-a_\lambda I(\mu_\lambda\Vert\pibar_\lambda)
 -\int r_\lambda\dd\mu_\lambda.
\end{equation}
Suppose further that, for each $\lambda$, $\pibar_\lambda$ satisfies the log-Sobolev inequality
\begin{equation}\label{eq:LSI}
 \Ent_{\pibar_\lambda}(f^2)
 \le \frac{2}{\rho_\lambda}\int\norm{\nabla f}^2\dd\pibar_\lambda,
 \qquad \rho_\lambda>0,
\end{equation}
and that for every law $\nu$ encountered in the evolution with finite relative entropy,
\begin{equation}\label{eq:defect-control}
 -\int r_\lambda\dd\nu
 \le c_\lambda\KL(\nu\Vert\pibar_\lambda)+\varepsilon_\lambda,
 \qquad c_\lambda,\varepsilon_\lambda\ge0.
\end{equation}
On a fixed interval $[\lambda_0,\Lambda]$, set
\begin{equation}\label{eq:kappa-entropy}
 k_\lambda:=2a_\lambda\rho_\lambda-c_\lambda.
\end{equation}
If $H(\lambda_0)<\infty$, then for every $\lambda\le\Lambda$,
\begin{equation}\label{eq:Gronwall}
 H(\lambda)
 \le e^{-\int_{\lambda_0}^{\lambda}k_u\dd u}H(\lambda_0)
 +\int_{\lambda_0}^{\lambda}
 e^{-\int_s^\lambda k_u\dd u}\varepsilon_s\dd s.
\end{equation}
\end{theorem}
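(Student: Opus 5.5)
We compute $H'(\lambda)$ directly from the Fokker--Planck equation of \eqref{eq:surrogate-sde} and the defect identity \eqref{eq:defect-continuity}, and then close the estimate with the log-Sobolev inequality, the defect control \eqref{eq:defect-control}, and Gr\"onwall's lemma.

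\textbf{Step 1: Fokker--Planck equation.} Write $\mu=\mu_\lambda$, $\pibar=\pibar_\lambda$, $h=\mu/\pibar$ and $\sbar=\nabla\log\pibar$. The law of \eqref{eq:surrogate-sde} satisfies
\[
\partial_\lambda\mu=-\nabla\cdot\bigl((q_\lambda+a_\lambda)\mu\,\sbar\bigr)+a_\lambda\Delta\mu .
\]
We split the right-hand side as in \eqref{eq:transport-mixing-decomp}. The Langevin part equals $a_\lambda\nabla\cdot(\mu\nabla\log h)$, because $\Delta\mu-\nabla\cdot(\mu\sbar)=\nabla\cdot(\mu(\nabla\log\mu-\sbar))$. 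The transport part is $-\nabla\cdot(\mu v_\lambda)$ with $v_\lambda=q_\lambda\sbar$.

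\textbf{Step 2: differentiate the relative entropy.} We have
\[
H'(\lambda)=\int\partial_\lambda\mu\,\log h\dd z+\int\mu\,\partial_\lambda\log h\dd z .
\]
The term $\int\partial_\lambda\mu$ vanishes by conservation of mass, so the second integral reduces to $-\int\mu\,\partial_\lambda\log\pibar$.
\begin{itemize}
\item For the Langevin part, integration by parts gives $-a_\lambda\int\mu\norm{\nabla\log h}^2=-a_\lambda I(\mu\Vert\pibar)$.
\item For the transport part, integration by parts gives $\int\mu\,v_\lambda\cdot\nabla\log h$. Expand $\nabla\log h=\nabla\log\mu-\sbar$, integrate $\int\mu v_\lambda\cdot\nabla\log\mu=\int\nabla\mu\cdot v_\lambda$ by parts to $-\int\mu\,\nabla\cdot v_\lambda$, and note $\nabla\cdot v_\lambda=q_\lambda\Delta\log\pibar$. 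This yields $-q_\lambda\int\mu(\Delta\log\pibar+\norm{\sbar}^2)$.
\end{itemize}
Adding the term $-\int\mu\,\partial_\lambda\log\pibar$ and comparing with the second form of \eqref{eq:defect}, the transport contributions combine exactly into $-\int r_\lambda\dd\mu$. This gives \eqref{eq:entropy-identity}.

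\textbf{Step 3: close the estimate.} Apply \eqref{eq:LSI} with $f^2=h$. Since $\int\norm{\nabla\sqrt h}^2\dd\pibar=\tfrac14 I(\mu\Vert\pibar)$, this gives $\KL(\mu\Vert\pibar)\le\frac{1}{2\rho_\lambda}I(\mu\Vert\pibar)$, that is, $I\ge2\rho_\lambda H$. Because $a_\lambda\ge0$, we get
\[
-a_\lambda I\le-2a_\lambda\rho_\lambda H .
\]
Using \eqref{eq:defect-control} with $\nu=\mu_\lambda$ then yields
\[
H'\le-k_\lambda H+\varepsilon_\lambda .
\]
Multiply by the integrating factor $e^{\int_{\lambda_0}^\lambda k_u\dd u}$ and integrate over $[\lambda_0,\lambda]$ to obtain \eqref{eq:Gronwall}. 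No sign condition on $k_\lambda$ is needed. If $H$ is only absolutely continuous, the same argument runs in integrated form.

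\textbf{Main obstacle.} The delicate step is justifying Step 2 rigorously: exchanging $\partial_\lambda$ with the integral, discarding boundary terms in each integration by parts, and keeping $H(\lambda)$ finite along the flow, so that \eqref{eq:defect-control} applies to $\nu=\mu_\lambda$. The theorem's regularity hypothesis is meant to cover exactly these points. To make this concrete, I would first prove the identity for smooth, compactly perturbed densities with Gaussian-type tails, and then pass to the limit via lower semicontinuity of $\KL$ and of the Fisher information. That limit yields the inequality form, which is all that Step 3 requires.
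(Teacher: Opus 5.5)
Your proposal is correct and follows the paper's proof. The steps are the same: the Fokker--Planck splitting into base transport plus $a_\lambda\nabla\cdot(\mu\nabla\log(\mu/\pibar_\lambda))$, the Fisher-information dissipation, the log-Sobolev bound $I\ge2\rho_\lambda H$, the defect control, and the integrating-factor Gr\"onwall step. The only cosmetic difference is that you integrate the transport term by parts against $\nabla\cdot v_\lambda$ and match the second form of \eqref{eq:defect}, whereas the paper moves the derivative onto $\pibar_\lambda$, obtains $-q_\lambda\int(\Delta\pibar_\lambda/\pibar_\lambda)\dd\mu_\lambda$, and matches the first form.
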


The proof is in \cref{app:entropy}. The first term in \eqref{eq:Gronwall}
propagates the mismatch between the practical high-noise initialization and the
initial surrogate posterior; the second accumulates transport-defect forcing after
Langevin dissipation. The log-Sobolev and entropy-variational tools are standard
\cite{bakry2014,dupuis1997}. In particular, the entropy variational inequality
verifies \eqref{eq:defect-control} whenever, for some $\theta_\lambda>0$,
\begin{equation}\label{eq:variational-defect}
 c_\lambda=\theta_\lambda^{-1},\qquad
 \varepsilon_\lambda=\theta_\lambda^{-1}
 \log\int e^{-\theta_\lambda r_\lambda}\dd\pibar_\lambda<\infty.
\end{equation}
Combined with \cref{prop:endpoint}, the finite-horizon estimate yields
$\norm{\mu_\lambda-p(\cdot\mid y)}_{\TV}\to0$ whenever
$\int_{\lambda_0}^\infty k_u\dd u=\infty$ and the weighted defect term in
\eqref{eq:Gronwall} vanishes.

\subsection{PD-IMEX discretization}
PD-IMEX discretizes the surrogate posterior dynamics over a grid of diffusion
scales. At each diffusion scale, the nonlinear prior prediction is evaluated once,
the linear likelihood drift is treated implicitly, and the Langevin innovation is
added after the data-consistency solve.
Using the score decomposition \eqref{eq:target-score}, the continuous dynamics
\eqref{eq:surrogate-sde} can be written as
\begin{equation}\label{eq:expanded-surrogate-sde}
 \dd Z_\lambda
 =(q_\lambda+a_\lambda)
 \bigl[s_\lambda^{\pr}(Z_\lambda)-H_\lambda Z_\lambda+b_\lambda\bigr]
 \dd\lambda+\sqrt{2a_\lambda}\dd W_\lambda.
\end{equation}
The prior score is nonlinear and supplied by the network, while the likelihood
drift is linear with coefficients $H_\lambda,b_\lambda$ determined by the
likelihood-continuation covariance $\Gamma_\lambda$. A fully implicit treatment would
require repeated network evaluations, whereas an explicit likelihood step can
be unstable when $H_\lambda$ is stiff. This structure motivates an
implicit-explicit (IMEX) step
that evaluates the network once and treats the linear likelihood implicitly.

Let
\begin{equation}\label{eq:grid}
 \lambda_0<\lambda_1<\cdots<\lambda_K,
 \qquad h_k:=\lambda_{k+1}-\lambda_k,
\end{equation}
and write $q_k:=q_{\lambda_k}$ and $\tau_k:=\tau_{\lambda_k}$. By Tweedie's
identity, $\E[X_0\mid Z_\lambda=z]=z+q_\lambda s_\lambda^{\pr}(z)$. Thus the
noise-prediction network also gives the clean prediction
\begin{equation}\label{eq:x0hat}
 \widehat\varepsilon_k
 :=\widehat\varepsilon_\theta(\sqrt{\alpha_{\lambda_k}}z_k,\lambda_k),
 \qquad
 \widehat x_{0,k}:=z_k-\tau_k\widehat\varepsilon_k.
\end{equation}
Although score, noise, and clean predictions are algebraically equivalent at
$\lambda_k$, their frozen schemes have different numerical behavior. Direct
clean-data prediction remains effective in high-dimensional pixel spaces where
prediction of noised quantities degrades \cite{li2026back}. Moreover, recent error
analysis shows that posterior-mean freezing admits a discretization bound
governed by the metric-entropy dimension, whereas score freezing retains
ambient-dimension dependence \cite{pang2026forward}. We therefore freeze the
clean prediction, leaving the known Gaussian state dependence unfrozen:
\begin{equation}\label{eq:frozen-prior}
 s_\lambda^{\pr}(z)\approx\frac{\widehat x_{0,k}-z}{q_\lambda}.
\end{equation}
Set
$\chi_\lambda:=q_\lambda+a_\lambda$ and
$\kappa_\lambda:=\chi_\lambda/q_\lambda$. On
$[\lambda_k,\lambda_{k+1}]$, substituting \eqref{eq:frozen-prior} into the
dynamics \eqref{eq:expanded-surrogate-sde} gives
\begin{equation}\label{eq:frozen-step-sde}
 \dd Z_\lambda
 =\bigl[\kappa_\lambda(\widehat x_{0,k}-Z_\lambda)
 -\chi_\lambda H_\lambda Z_\lambda+\chi_\lambda b_\lambda\bigr]\dd\lambda
 +\sqrt{2a_\lambda}\dd W_\lambda.
\end{equation}
Equation \eqref{eq:frozen-step-sde} keeps the continuation dependence of
$H_\lambda,b_\lambda$ explicit through \eqref{eq:Gamma}--\eqref{eq:Hb}.

We integrate the mean-reverting prior part of \eqref{eq:frozen-step-sde}
exactly, freeze the likelihood coefficients at the right endpoint, and apply
backward Euler only to that linear likelihood term. The required step
coefficients are
\begin{align}
 \varrho_k&:=\exp\!\left(-\int_{\lambda_k}^{\lambda_{k+1}}\kappa_s\dd s\right),
 \label{eq:rho-step}\\
 \Omega_k&:=\int_{\lambda_k}^{\lambda_{k+1}}\chi_s\dd s,
 &V_k&:=2\int_{\lambda_k}^{\lambda_{k+1}}
 a_s\exp\!\left(-2\int_s^{\lambda_{k+1}}\kappa_u\dd u\right)\dd s.
 \label{eq:Omega-V}
\end{align}
Here $\varrho_k$ is the exact mean contraction of the frozen mean-reverting
component, $\Omega_k$ is the accumulated likelihood weight, and $V_k$ is the
conditional variance generated by the Langevin diffusion. With
$H_{k+1}:=H_{\lambda_{k+1}}$ and
$b_{k+1}:=b_{\lambda_{k+1}}$, the PD-IMEX step is
\begin{equation}\label{eq:pd-imex-step}
 \boxed{
 \begin{aligned}
 (I+\Omega_kH_{k+1})z_{k+1}^{\det}
 &=\widehat x_{0,k}+\varrho_k(z_k-\widehat x_{0,k})+\Omega_kb_{k+1},\\
 z_{k+1}&=z_{k+1}^{\det}+\sqrt{V_k}\,\xi_k,
 \qquad \xi_k\sim\mathcal N(0,I).
\end{aligned}}
\end{equation}

\begin{algorithm}[H]
\caption{PD-IMEX posterior sampler}\label{alg:pd-imex}
\begin{algorithmic}[1]
\Require Measurement $y$, operator $A$, measurement-noise level $\sigma_d$, score network $\widehat\varepsilon_\theta$, grid $\{\lambda_k\}_{k=0}^K$, likelihood-continuation schedule $\eta_\lambda$, Langevin-intensity schedule $a_\lambda$
\Ensure Approximate posterior sample $z_K$
\State Draw $x_{\lambda_0}\sim\mathcal N(0,I)$ and convert it to the clean coordinate: $z_0\gets x_{\lambda_0}/\sqrt{\alpha_{\lambda_0}}$
\For{$k=0,\ldots,K-1$}
  \State Form $\Gamma_{\lambda_{k+1}}$, $H_{k+1}$, and $b_{k+1}$ from \eqref{eq:Gamma} and \eqref{eq:Hb}
  \State Evaluate $\widehat\varepsilon_k$ and $\widehat x_{0,k}$ from \eqref{eq:x0hat}
  \State Compute $\varrho_k$, $\Omega_k$, and $V_k$ from \eqref{eq:rho-step}--\eqref{eq:Omega-V}
  \State Solve the deterministic IMEX equation in \eqref{eq:pd-imex-step} for $z_{k+1}^{\det}$
  \State Draw $\xi_k\sim\mathcal N(0,I)$ and set $z_{k+1}\gets z_{k+1}^{\det}+\sqrt{V_k}\xi_k$
\EndFor
\State Return $z_K$
\end{algorithmic}
\end{algorithm}

The implicit solve in \eqref{eq:pd-imex-step} stabilizes the deterministic
likelihood drift. The Gaussian term $\sqrt{V_k}\xi_k$ is the Langevin innovation
after integration through the frozen prior
mean-reverting subflow. Adding it after the likelihood solve prevents additional
filtering by the likelihood resolvent. Classical analyses of stiff stochastic
integrators identify the same loss of fluctuations when an under-resolved
relaxation is handled by an L-stable implicit update \cite{li2008implicit}. If the
innovation were instead placed inside the backward-Euler solve, a likelihood
eigenmode of curvature $\nu$ would retain only
$(1+\Omega_k\nu)^{-2}$ of its innovation variance. This post-solve ordering
preserves the innovation covariance prescribed by the update, while the following
proposition establishes first-order consistency with the continuous posterior
dynamics. It is not claimed to integrate the full stiff frozen covariance exactly.

\begin{proposition}[Post-solve Langevin innovation and local consistency]\label{prop:covariance}
Assume $H_{k+1}$ is symmetric positive semidefinite and let
$R_k=(I+\Omega_kH_{k+1})^{-1}$. Conditional on $z_k$, the post-solve update
\eqref{eq:pd-imex-step} has covariance $V_kI$. If the same Langevin
innovation is placed inside the implicit likelihood solve, its covariance is
$V_kR_k^2$; hence post-solve placement avoids artificial damping by the
likelihood resolvent.

If $a_\lambda$, $q_\lambda$, and the coefficients are $C^1$ on a fixed finite interval, then as $h_k\to0$,
\begin{equation}\label{eq:first-order-params}
 \varrho_k=1-\kappa_{\lambda_k}h_k+O(h_k^2),
 \qquad
 \Omega_k=\chi_{\lambda_k}h_k+O(h_k^2),
 \qquad
 V_k=2a_{\lambda_k}h_k+O(h_k^2).
\end{equation}
Consequently, when
$\widehat x_{0,k}=z_k+q_ks_{\lambda_k}^{\pr}(z_k)$, the conditional mean and
covariance of \eqref{eq:pd-imex-step} match those of
\eqref{eq:surrogate-sde} to first order.
\end{proposition}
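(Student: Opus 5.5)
The argument splits into three parts: exact covariance computations for the two innovation placements, Taylor expansions of the step coefficients, and a one-step comparison of the scheme with the Euler--Maruyama increment of \eqref{eq:surrogate-sde}.

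\textbf{Covariance of the two placements.} Conditional on $z_k$, the network output $\widehat\varepsilon_k$, and hence $\widehat x_{0,k}$, is deterministic. So is $z_{k+1}^{\det}$, obtained from the linear solve in \eqref{eq:pd-imex-step}; the matrix $I+\Omega_kH_{k+1}$ is invertible because $H_{k+1}\succeq0$ and $\Omega_k\ge0$. Therefore $\operatorname{Cov}(z_{k+1}\mid z_k)=V_k\operatorname{Cov}(\xi_k)=V_kI$.

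If the innovation is instead added to the right-hand side of the implicit solve, then $z_{k+1}=R_k(\cdots+\sqrt{V_k}\xi_k)$ and the covariance is $V_kR_kR_k^\top$. Since $R_k$ is symmetric, this equals $V_kR_k^2$. On an eigenvector of $H_{k+1}$ with eigenvalue $\nu$ it is $V_k(1+\Omega_k\nu)^{-2}\le V_k$, which is the claimed damping.

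\textbf{Expansion of the coefficients.} I use the $C^1$ hypotheses on $a_\lambda$ and $q_\lambda$ on the compact interval. Since $q_\lambda=e^{-2\lambda}$ is bounded below there, $\kappa_\lambda$ and $\chi_\lambda$ are $C^1$. Hence $\int_{\lambda_k}^{\lambda_{k+1}}\kappa_s\,ds=\kappa_{\lambda_k}h_k+O(h_k^2)$. Applying $e^{-x}=1-x+O(x^2)$ gives $\varrho_k$, and $\Omega_k$ is immediate.

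For $V_k$, the exponential weight is $1+O(h_k)$ uniformly for $s\in[\lambda_k,\lambda_{k+1}]$, and $a_s=a_{\lambda_k}+O(h_k)$. Therefore $V_k=2a_{\lambda_k}h_k+O(h_k^2)$. All $O(\cdot)$ constants are uniform in $k$ by compactness.

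\textbf{First-order consistency.} I also use $H_{k+1}=H_{\lambda_k}+O(h_k)$ and $b_{k+1}=b_{\lambda_k}+O(h_k)$, which follow from the $C^1$ dependence of $\Gamma_\lambda$ on $\eta_\lambda$ and the uniform invertibility $\Gamma_\lambda\succeq\sigma_d^2I$. Then $R_k=I-\chi_{\lambda_k}h_kH_{\lambda_k}+O(h_k^2)$. Substituting $\widehat x_{0,k}-z_k=q_ks^{\pr}_{\lambda_k}(z_k)$, the right-hand side of the solve becomes
\[
z_k+(1-\varrho_k)q_ks^{\pr}_{\lambda_k}(z_k)+\Omega_kb_{k+1}
=z_k+\chi_{\lambda_k}h_k\bigl(s^{\pr}_{\lambda_k}(z_k)+b_{\lambda_k}\bigr)+O(h_k^2),
\]
where the key relation is $(1-\varrho_k)q_k=\kappa_{\lambda_k}q_kh_k+O(h_k^2)=\chi_{\lambda_k}h_k+O(h_k^2)$. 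Applying $R_k$ yields
\[
\E[z_{k+1}\mid z_k]=z_k+\chi_{\lambda_k}h_k\,\sbar_{\lambda_k}(z_k;y)+O(h_k^2),
\]
by \eqref{eq:target-score}. The conditional covariance is $2a_{\lambda_k}h_kI+O(h_k^2)$. These match the Euler--Maruyama mean and covariance of \eqref{eq:surrogate-sde}.

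\textbf{Main obstacle.} The delicate step is making the $O(h_k^2)$ remainders uniform. The remainder in $R_k$ involves $\Omega_k^2H_{k+1}^2$ acting on $z_k$ and on the prior score. I would state the mean consistency for fixed $z_k$, or locally uniformly on bounded sets, using boundedness of $H_\lambda$ on the compact $\lambda$-interval. The expansion of $R_k$ is valid for $h_k\norm{H}$ small, because the Neumann series converges when $\Omega_k\norm{H_{k+1}}<1$. I would avoid claiming uniformity in the stiffness of $H$.
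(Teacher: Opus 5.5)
Your proposal is correct and follows the paper's proof essentially step for step. It uses the same exact covariance computation ($V_kI$ versus $V_kR_k^2$ via symmetry of $R_k$), the same Taylor expansion of the scalar step integrals, and the same substitution of Tweedie's identity using $(1-\varrho_k)q_k=\chi_{\lambda_k}h_k+O(h_k^2)$ and $R_k=I-\chi_{\lambda_k}h_kH_{\lambda_k}+O(h_k^2)$. Your caveat that the remainders are only locally uniform in $z_k$ and not uniform in the stiffness of $H$ matches the paper's own remark after the proposition. The Neumann-series smallness condition is unnecessary, however: $R_k-(I-\Omega_kH_{k+1})=R_k(\Omega_kH_{k+1})^2$ and $\norm{R_k}\le1$ whenever $H_{k+1}\succeq0$.
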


The proof is in \cref{app:weak}. This statement concerns the innovation
covariance of the chosen update decomposition; it does not claim exact integration
of the full stiff frozen covariance. The first-order expansion fixes the finite
interval and its coefficients, so it is not uniform in a simultaneous stiff
limit where $\Omega_k\norm{H_{k+1}}$ remains order one under refinement.

Having established retention of the post-solve Langevin innovation and local
consistency, we next quantify the global weak error relative to the continuous
sampler in the exact-score setting. Let $Q_{k}$ denote
the exact transition of \eqref{eq:surrogate-sde} from $\lambda_k$ to
$\lambda_{k+1}$, and let $\widehat Q_k$ be the one-step kernel of
\eqref{eq:pd-imex-step} using the surrogate posterior score in
\eqref{eq:target-score}.

\begin{theorem}[First-order weak error of PD-IMEX]\label{thm:weak}
Fix $\Lambda>\lambda_0$ and let $\lambda_K=\Lambda$. Suppose PD-IMEX uses the
exact prior score, so that
$\widehat x_{0,k}=z_k+q_ks_{\lambda_k}^{\pr}(z_k)$, and write
$B_\lambda(z)=(q_\lambda+a_\lambda)\sbar_\lambda(z;y)$. Assume on
$[\lambda_0,\Lambda]$ that $B_\lambda$, $a_\lambda$, $H_\lambda$, and
$b_\lambda$ are $C^1$ in $\lambda$, and that the spatial derivatives of
$B_\lambda$ through order four have polynomial growth uniformly in $\lambda$.
Assume also that the exact
process and the numerical chains are nonexplosive and satisfy the polynomial
moment bounds used below, uniformly on the finite interval and over sufficiently
fine grids. For a test function $\phi\in C_b^4(\R^d)$, suppose that the
associated backward Kolmogorov solution has spatial derivatives through order
four with polynomial-growth bounds uniformly on $[\lambda_0,\Lambda]$.
Initialize both processes from the same law and let
\[
 h:=\max_{0\le k<K}h_k.
\]
Then there is a constant $C_{\phi,\Lambda}$, independent of the grid, such that
\begin{equation}\label{eq:weak-bound}
 \left|\E\phi(z_K^{\alg})-\int\phi\dd\mu_\Lambda\right|
 \le C_{\phi,\Lambda}h.
\end{equation}
\end{theorem}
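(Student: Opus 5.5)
The plan is the standard Talay--Tubaro / Milstein global weak-error argument: a telescoping sum over the backward Kolmogorov solution, combined with one-step moment matching supplied by \cref{prop:covariance}. Let $\mathcal L_\lambda=B_\lambda\cdot\nabla+a_\lambda\Delta$ be the generator of \eqref{eq:surrogate-sde}. Define $u(\lambda,z):=\E[\phi(Z_\Lambda)\mid Z_\lambda=z]$, which solves $\partial_\lambda u+\mathcal L_\lambda u=0$ with $u(\Lambda,\cdot)=\phi$. By hypothesis, $u$ has spatial derivatives through order four with polynomial growth, uniformly on $[\lambda_0,\Lambda]$. Since $Q_k u(\lambda_{k+1},\cdot)=u(\lambda_k,\cdot)$ and both processes start from the same law, the error telescopes:
\[
\E\phi(z_K^{\alg})-\int\phi\dd\mu_\Lambda=\sum_{k=0}^{K-1}\E\Bigl[(\widehat Q_k-Q_k)u(\lambda_{k+1},\cdot)(z_k^{\alg})\Bigr].
\]
It therefore suffices to prove a local bound of the form $|(\widehat Q_k-Q_k)g(z)|\le C(1+|z|^p)h_k^2$ for $g=u(\lambda_{k+1},\cdot)$, with $C$ depending only on polynomial-growth constants of derivatives of $g$ up to order four. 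Summing these local bounds and using the uniform moment bounds on the numerical chain gives $C\sum_k h_k^2\le C(\Lambda-\lambda_0)h$.

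For the local bound, expand both kernels to second order in time. On the exact side, Itô/Dynkin applied twice gives $Q_kg(z)=g(z)+h_k\mathcal L_{\lambda_k}g(z)+R^{\mathrm{ex}}$. Here $R^{\mathrm{ex}}=O(h_k^2)$ is controlled by $\mathcal L^2 g$ and $\partial_\lambda\mathcal L$, which involve derivatives of $g$ up to order four and of $B$ up to order two, all with polynomial growth. The $C^1$-in-$\lambda$ assumptions handle the time dependence of the coefficients. On the numerical side, write the increment explicitly:
\[
\Delta:=z_{k+1}-z_k=R_k\bigl[(\varrho_k-1)(z_k-\widehat x_{0,k})+\Omega_k(b_{k+1}-H_{k+1}z_k)\bigr]+\sqrt{V_k}\xi_k .
\]
Use $R_k=I-\Omega_kH_{k+1}+O(h_k^2\norm{H}^2)$, the expansions \eqref{eq:first-order-params}, the identity $\widehat x_{0,k}-z_k=q_ks^{\pr}_{\lambda_k}(z_k)$, and $\kappa q=\chi$. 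Together these give $\E[\Delta]=h_kB_{\lambda_k}(z_k)+O(h_k^2(1+|z_k|^p))$ and $\E[\Delta\Delta^\top]=2a_{\lambda_k}h_kI+O(h_k^2(1+|z_k|^p))$. Since $\xi_k$ is Gaussian and independent of the deterministic part, third moments are $O(h_k^2)$ and higher moments satisfy $\E|\Delta|^{j}=O(h_k^{2})$ for $j\ge4$. The third moment vanishes at order $h^{3/2}$ because Gaussian odd moments are zero, so only mixed terms of order $h^2$ remain. A fourth-order Taylor expansion of $g$ about $z_k$ with integral remainder then yields $\widehat Q_kg=g+h_k\mathcal L_{\lambda_k}g+O(h_k^2(1+|z|^p))$. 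The first-order terms cancel against the exact side.

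The main obstacle is uniformity of the $O(h_k^2)$ remainders. The remainders in the expansions of $\varrho_k$, $\Omega_k$, $V_k$ and of the resolvent $R_k$ must be bounded by grid-independent constants. This holds because the interval is fixed, $q_\lambda$ is bounded below there, the coefficients are $C^1$, and $\sup_\lambda\norm{H_\lambda}\le\norm{A}^2/\sigma_d^2$. The last bound keeps $R_k$ expandable once $h$ is small; this is the non-stiff regime flagged after \cref{prop:covariance}, and it is why the constant $C_{\phi,\Lambda}$ may depend on $\sigma_d$. For the drift mismatch I would also check that the linear terms in $z_k$ coming from $H_{k+1}$ rather than $H_{\lambda_k}$ contribute only $O(h_k^2(1+|z_k|))$, by the $C^1$ dependence of $H_\lambda$ and $b_\lambda$. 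The Taylor remainders are handled by Hölder's inequality together with polynomial moments of $z_k^{\alg}$ and of $\Delta$, both of which are assumed uniformly bounded.
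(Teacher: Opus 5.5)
Your proposal is correct and follows essentially the same route as the paper's proof. Both arguments telescope with the backward functions $u_k=Q_ku_{k+1}$, and both derive the one-step expansion $\widehat Q_kf=f+h_k\mathcal L_{\lambda_k}f+O((1+\norm{z}^m)h_k^2)$ from first- and raw-second-moment matching, $O(h_k^2)$ signed third-moment and absolute fourth-moment bounds, and a Taylor expansion with fourth-order remainder; they then match this with Dynkin's formula applied twice to $Q_k$ and conclude with $\sum_k h_k^2\le h(\Lambda-\lambda_0)$. Your explicit increment formula, the bound $\norm{H_\lambda}\le\norm{A}^2/\sigma_d^2$ that justifies expanding $R_k$ uniformly, and the $H_{k+1}$-versus-$H_{\lambda_k}$ check only make explicit the uniformity details that the paper handles through \cref{prop:covariance}.
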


These are finite-horizon weak-approximation assumptions. Stating the required
backward regularity explicitly avoids imposing stronger global conditions on the
endpoint prior than are needed for the argument.

The proof in \cref{app:weak} establishes the local expansion
\begin{equation}\label{eq:local-weak}
 \widehat Q_k f(z)
 =f(z)+h_k\mathcal L_{\lambda_k}f(z)+O\bigl((1+\norm{z}^m)h_k^2\bigr),
\end{equation}
where
\begin{equation}\label{eq:generator}
 \mathcal L_\lambda f
 =(q_\lambda+a_\lambda)\sbar_\lambda(\,\cdot\,;y)\cdot\nabla f
 +a_\lambda\Delta f.
\end{equation}
The exact transition has the same first-order expansion. A telescoping argument with backward Kolmogorov regularity then gives the global result.

Combining the finite-horizon weak error with continuous tracking and endpoint
consistency yields the following decomposition for the oracle-score chain.

\begin{corollary}[Finite-horizon weak-error decomposition for oracle-score PD-IMEX]\label{cor:discrete}
Under the assumptions of \cref{thm:continuous,thm:weak}, for every $\phi\in C_b^4(\R^d)$,
\begin{equation}\label{eq:master-bound}
 \begin{aligned}
 \left|\E\phi(z_K^{\alg})-\int\phi\dd p(\cdot\mid y)\right|
 \le{}& C_{\phi,\lambda_K}h
 +2\norm{\phi}_\infty\sqrt{\frac12H(\lambda_K)}\\
 &+2\norm{\phi}_\infty
 \norm{\pibar_{\lambda_K}-p(\cdot\mid y)}_{\TV}.
 \end{aligned}
\end{equation}
\end{corollary}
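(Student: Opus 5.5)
The plan is a triangle inequality through the two intermediate laws $\mu_{\lambda_K}$ (the continuous sampler at the final scale) and $\pibar_{\lambda_K}$ (the surrogate posterior there). Write
\[
 \E\phi(z_K^{\alg})-\int\phi\dd p(\cdot\mid y)
 =\Bigl[\E\phi(z_K^{\alg})-\int\phi\dd\mu_{\lambda_K}\Bigr]
 +\Bigl[\int\phi\dd\mu_{\lambda_K}-\int\phi\dd\pibar_{\lambda_K}\Bigr]
 +\Bigl[\int\phi\dd\pibar_{\lambda_K}-\int\phi\dd p(\cdot\mid y)\Bigr].
\]
Each bracket is then controlled by one earlier result.

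\emph{First bracket.} This is exactly the weak-error estimate \eqref{eq:weak-bound} of \cref{thm:weak} with $\Lambda=\lambda_K$. Both processes are started from the same law, and $\phi\in C_b^4$. This gives $C_{\phi,\lambda_K}h$.

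\emph{Second and third brackets.} For any bounded measurable $\phi$ and probability measures $\mu,\nu$,
\[
 \Bigl|\int\phi\dd\mu-\int\phi\dd\nu\Bigr|\le 2\norm{\phi}_\infty\norm{\mu-\nu}_{\TV},
\]
under the convention $\norm{\mu-\nu}_{\TV}=\sup_B|\mu(B)-\nu(B)|$. If $\mu-\nu$ has Hahn decomposition $P\cup N$, the difference is at most $\norm{\phi}_\infty(|\mu-\nu|(P)+|\mu-\nu|(N))$, which is the stated bound. If the paper instead takes TV to be the full variation norm, the factor $2$ is simply loose, so the estimate still holds.

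For the third bracket this gives $2\norm{\phi}_\infty\norm{\pibar_{\lambda_K}-p(\cdot\mid y)}_{\TV}$ directly. This is the quantity that \cref{prop:endpoint} shows vanishes as $\lambda_K\to\infty$; here it only needs to be recorded.

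For the second bracket, apply Pinsker's inequality:
\[
 \norm{\mu_{\lambda_K}-\pibar_{\lambda_K}}_{\TV}\le\sqrt{\tfrac12\KL(\mu_{\lambda_K}\Vert\pibar_{\lambda_K})}=\sqrt{\tfrac12 H(\lambda_K)}.
\]
Here $H$ is defined in \eqref{eq:H}. Its finiteness, together with the bound \eqref{eq:Gronwall}, comes from \cref{thm:continuous} under the assumption $H(\lambda_0)<\infty$. If $H(\lambda_K)=\infty$, the inequality holds trivially.

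The only real care point is keeping the measures consistent. The $\mu_\Lambda$ in \cref{thm:weak} and the $\mu_\lambda$ in \cref{thm:continuous} must be the same continuous law. That requires both theorems to use the same initialization at $\lambda_0$, and the combined hypotheses guarantee this. The step I expect to need the most attention is fixing the TV normalization so that the constants $2\norm{\phi}_\infty$ and $\sqrt{H/2}$ are correct; everything else is a direct assembly of earlier results.
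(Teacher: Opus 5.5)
Your proposal is correct and follows the argument the paper intends; the paper gives no separate proof and obtains \eqref{eq:master-bound} by exactly this triangle inequality through $\mu_{\lambda_K}$ and $\pibar_{\lambda_K}$. It uses \cref{thm:weak} with $\Lambda=\lambda_K$, Pinsker's inequality for $H(\lambda_K)$, and the bounded-test-function total-variation estimate, and its convention (stated in the proof of \cref{prop:endpoint}) takes total variation to be one half of the $L^1$ distance, so your factor $2\norm{\phi}_\infty$ is exact rather than loose.
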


The three terms in \eqref{eq:master-bound} are, respectively, time-discretization
error, sampler-to-surrogate tracking error, and surrogate endpoint error.

\section{Experiments}
The experiments evaluate the three layers of the framework: construction of the
surrogate posterior path from the multiscale diffusion prior, the role of the
Langevin intensity in tracking and exploration, and the numerical realization of
the dynamics.

\subsection{Experimental protocol}
We evaluate on 100 images from Flickr-Faces-HQ (FFHQ) $256\times256$ \cite{karras2019} and 100 images from ImageNet $256\times256$ \cite{russakovsky2015}. For FFHQ we use the pretrained prior from DPS \cite{chung2022dps}; for ImageNet we use the unconditional prior of Dhariwal and Nichol \cite{dhariwal2021}. We consider bicubic $\times4$ super-resolution, Gaussian deblurring with a $61\times61$ kernel of standard deviation $3.0$, and motion deblurring with a $61\times61$ kernel of intensity $0.5$. Independent Gaussian measurement noise with $\sigma_d=0.05$ is added in every main task. We report peak signal-to-noise ratio (PSNR), structural similarity index (SSIM), and learned perceptual image patch similarity (LPIPS) \cite{zhang2018lpips}.

We count each score-network call as one network function evaluation (NFE). The
reported 100-NFE outputs use 99 PD-IMEX transitions followed by one terminal
denoising evaluation, which serves as a finite-noise reconstruction readout.
The endpoints are the coarsest and finest trained diffusion scales,
$\lambda_0=-5.05884$ and $\lambda_K=4.60512$, and the grid is uniform in
log-SNR up to rounding to the nearest distinct training index. We parameterize
the Langevin intensity as $a_\lambda=\zeta q_\lambda$, where $\zeta$ is the
relative Langevin intensity. The likelihood covariance is
\begin{equation}\label{eq:experimental-continuation}
 \Gamma(\tau)=\sigma_d^2I+\eta(\tau)AA^\top,
 \qquad
 \eta(\tau)=\frac{\tau^4}{\tau^2+\tau_c^2},
 \qquad \tau_c=6.
\end{equation}
This choice follows the uncertainty interpretation in \eqref{eq:Gamma}. Since
$z=X_0+\tau\varepsilon$, the perturbation observed through $A$ has covariance
$\tau^2AA^\top$, giving the nominal likelihood covariance
$\sigma_d^2I+\tau^2AA^\top$. We temper it by the smooth gate
$\eta(\tau)/\tau^2=\tau^2/(\tau^2+\tau_c^2)$. Thus $\eta(\tau)\sim\tau^2$ at coarse diffusion scales,
where projected diffusion uncertainty should be retained, whereas
$\eta(\tau)\sim\tau^4/\tau_c^2$ near the endpoint, where the continuation is removed
more rapidly to expose the original likelihood. The crossover satisfies
$\eta(\tau_c)=\tau_c^2/2$. We fix $\tau_c=6$ for all main tasks; the resulting schedule
is positive, smooth, and endpoint-consistent.
We use $\zeta=0.1$ for FFHQ and $\zeta=0.01$ for ImageNet. MGPS and DAPS use multiple score calls within each outer level, so both levels and actual NFEs are reported.

Every likelihood solve is noniterative. Deblurring is diagonalized by the
two-dimensional FFT, bicubic super-resolution uses the explicit singular-vector
basis of the separable degradation, and inpainting uses a diagonal closed form.
The controlled ablations share the same images, measurement-noise realizations,
operator seed, and posterior-sampling seeds.

\FloatBarrier
\begin{table}[!ht]
\centering
\caption{Reconstruction results with $\sigma_d=0.05$, averaged over 100 images per dataset. Computational cost is reported as outer levels/actual score-network evaluations. Higher PSNR and SSIM and lower LPIPS are better.}
\label{tab:main}
\scriptsize
\setlength{\tabcolsep}{3.0pt}
\resizebox{\textwidth}{!}{%
\begin{tabular}{l c ccc ccc ccc}
\toprule
& Levels/ & \multicolumn{3}{c}{Gaussian deblur} & \multicolumn{3}{c}{Motion deblur} & \multicolumn{3}{c}{Super-resolution ($\times4$)}\\
Method & NFEs$\downarrow$ & PSNR$\uparrow$ & SSIM$\uparrow$ & LPIPS$\downarrow$ & PSNR$\uparrow$ & SSIM$\uparrow$ & LPIPS$\downarrow$ & PSNR$\uparrow$ & SSIM$\uparrow$ & LPIPS$\downarrow$\\
\midrule
\multicolumn{11}{l}{\textit{FFHQ}}\\
DPS \cite{chung2022dps} & 1000/1000 & 25.41 & .708 & \underline{.144} & 23.69 & .658 & .178 & 24.21 & .673 & .194\\
DiffPIR \cite{zhu2023diffpir} & 100/100 & 28.24 & .774 & .184 & 27.09 & .704 & .194 & 27.67 & .772 & \underline{.118}\\
DDRM \cite{kawar2022ddrm} & 100/100 & \underline{29.12} & \underline{.837} & \textbf{.132} & \underline{29.23} & \underline{.846} & \textbf{.089} & \underline{29.30} & \underline{.843} & .141\\
RED-Diff \cite{mardani2023reddiff} & 100/100 & 28.46 & .705 & .242 & 26.62 & .768 & .235 & 27.72 & .715 & .361\\
MGPS \cite{moufad2024mgps} & 100/814 & 27.88 & .792 & .149 & 26.82 & .769 & .147 & 27.55 & .783 & \textbf{.115}\\
DAPS \cite{zhang2025daps} & 100/500 & 28.71 & .792 & .155 & 28.39 & .795 & .142 & 28.66 & .764 & .142\\
Ours (PD-IMEX) & 100/100 & \textbf{29.52} & \textbf{.847} & .208 & \textbf{30.57} & \textbf{.869} & \underline{.139} & \textbf{29.49} & \textbf{.847} & .201\\
\midrule
\multicolumn{11}{l}{\textit{ImageNet}}\\
DPS \cite{chung2022dps} & 1000/1000 & 21.81 & .526 & .314 & 21.15 & .503 & .352 & 21.58 & .507 & .404\\
DiffPIR \cite{zhu2023diffpir} & 100/100 & 24.18 & .597 & .493 & 23.16 & .514 & .422 & 23.71 & .577 & .338\\
DDRM \cite{kawar2022ddrm} & 100/100 & \underline{25.59} & \underline{.714} & \textbf{.336} & \underline{26.61} & \underline{.761} & \textbf{.189} & \textbf{25.64} & \textbf{.720} & .321\\
RED-Diff \cite{mardani2023reddiff} & 100/100 & 24.52 & .661 & .491 & 23.76 & .635 & .429 & 24.46 & .609 & .537\\
MGPS \cite{moufad2024mgps} & 100/674 & 24.68 & .657 & .397 & 24.14 & .642 & .352 & 24.77 & .667 & \textbf{.305}\\
DAPS \cite{zhang2025daps} & 100/500 & 24.98 & .661 & \underline{.349} & 25.33 & .685 & .303 & 25.08 & .669 & \underline{.319}\\
Ours (PD-IMEX) & 100/100 & \textbf{25.62} & \textbf{.716} & .399 & \textbf{27.59} & \textbf{.779} & \underline{.197} & \underline{25.35} & \underline{.712} & .390\\
\bottomrule
\end{tabular}}
\end{table}

At 100 score-network evaluations, PD-IMEX provides the strongest overall
distortion performance in the comparison and remains competitive in perceptual
quality, with the expected fidelity--perception trade-off on several tasks. It
ranks first in ten of the twelve PSNR and SSIM comparisons and second in the
remaining two, including against methods using five to ten times as many score
evaluations. DDRM and MGPS retain lower LPIPS on several tasks, whereas PD-IMEX
is second in motion-deblurring LPIPS on both datasets.

\begin{figure}[t]
\centering
\includegraphics[width=\textwidth]{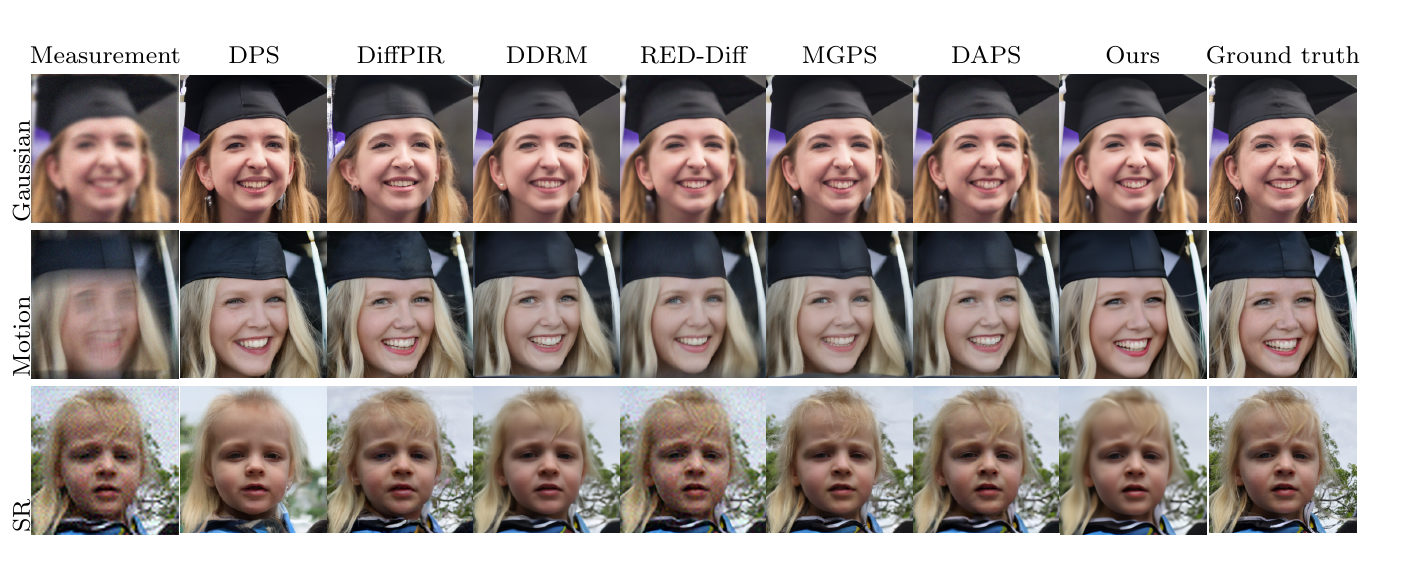}
\caption{Representative FFHQ reconstructions for the methods in \cref{tab:main}. Columns share the same measurement within each row.}
\label{fig:reconstructions}
\end{figure}

\subsection{Ablation of multiscale posterior construction}
We first test how the measurement model is coupled to the multiscale diffusion
prior. The naive baseline applies time-independent likelihood guidance, the
second configuration corrects the likelihood center through scale consistency,
and the full configuration additionally continues its covariance. All other
stochastic and numerical settings are fixed. The naive multiplier is tuned
separately for each task, giving $\gamma^\star=1$ on FFHQ and $\gamma^\star=0.3$
on ImageNet.

\begin{table}[t]
\centering
\caption{Ablation of the multiscale posterior construction over 100 images at a fixed 100-NFE budget. The naive baseline uses its best time-independent scalar multiplier. The second configuration aligns the likelihood center with the clean-image coordinate, and the third additionally continues its covariance to account for residual diffusion uncertainty. All Langevin and numerical settings are fixed.}
\label{tab:ablation}
\scriptsize
\setlength{\tabcolsep}{3.0pt}
\resizebox{\textwidth}{!}{%
\begin{tabular}{l ccc ccc}
\toprule
& \multicolumn{3}{c}{FFHQ Gaussian deblur} & \multicolumn{3}{c}{ImageNet motion deblur}\\
Configuration & PSNR$\uparrow$ & SSIM$\uparrow$ & LPIPS$\downarrow$ & PSNR$\uparrow$ & SSIM$\uparrow$ & LPIPS$\downarrow$\\
\midrule
Naive guidance (best scalar) & 19.228 & .67088 & .33503 & 23.799 & \underline{.70211} & .34586\\
Scale-consistent likelihood & \underline{29.502} & \textbf{.84730} & \textbf{.20597} & \underline{24.729} & .56936 & \underline{.20600}\\
Scale-consistent + continuation & \textbf{29.523} & \underline{.84723} & \underline{.20753} & \textbf{27.585} & \textbf{.77864} & \textbf{.19684}\\
\bottomrule
\end{tabular}}
\end{table}

\Cref{tab:ablation} separates the two components of the surrogate
likelihood. Scale consistency corrects its center by undoing the deterministic
attenuation of the diffusion state. A scalar guidance weight can compensate for
part of a center mismatch on some tasks but cannot replace this scale-dependent
alignment uniformly. Likelihood continuation then changes the effective
covariance, accounting for residual diffusion uncertainty at coarse scales. Its
effect is small on FFHQ Gaussian deblurring but substantial on ImageNet motion
deblurring.

\subsection{Langevin intensity: tracking and exploration}
\Cref{fig:exploration} shows that the practical effect of the relative Langevin intensity is
nonmonotone and task dependent. On FFHQ Gaussian deblurring, a moderate value
improves distortion accuracy, while a larger value can improve LPIPS at the
expense of PSNR. ImageNet motion deblurring favors a smaller value, and
an excessively large value degrades both metrics. Thus $\zeta$ controls a
finite-step tracking--exploration trade-off rather than serving as a universally
beneficial choice of Langevin intensity.

\begin{figure}[t]
\centering
\includegraphics[width=\textwidth]{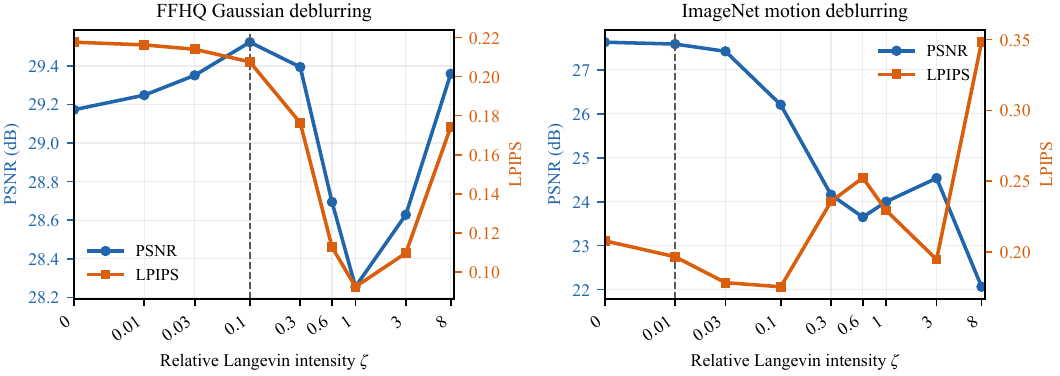}
\caption{Effect of the relative Langevin intensity $\zeta$ on reconstruction fidelity and perceptual quality for two noisy deblurring tasks. Each panel reports PSNR on the left axis and LPIPS on the right axis over 100 images. Only $\zeta$ is varied; the continuation, numerical grid, score network, measurements, and random seeds are fixed. Dashed vertical lines indicate the values used in \cref{tab:main}.}
\label{fig:exploration}
\end{figure}

\Cref{fig:exploration-samples} complements the aggregate metrics with a
multimodal inpainting example. At $\zeta=0$, the finite-step deterministic
implementation contracts different initializations toward similar, relatively
smooth completions. In contrast, the selected larger value produces coherent
but visibly different facial completions, illustrating how the relative
Langevin intensity controls sample variability in an underdetermined problem.

\begin{figure}[!ht]
\centering
\includegraphics[width=\textwidth]{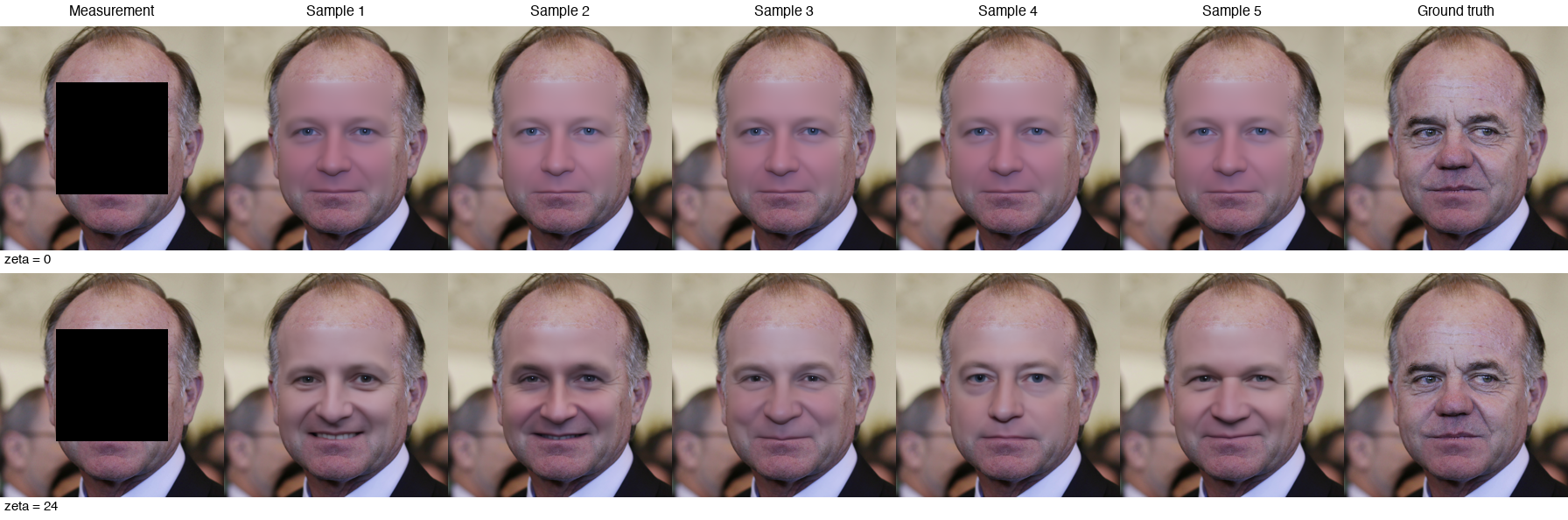}
\caption{Effect of the relative Langevin intensity on sample diversity for centered-box inpainting. The two rows use $\zeta=0$ and $\zeta=24$, respectively, and the five central columns use independent sampling seeds while keeping the measurement and operator fixed.}
\label{fig:exploration-samples}
\end{figure}

\subsection{Numerical realization: coarse-grid stability and innovation placement}
\paragraph{Coarse-grid stability}
We next examine the two numerical choices in PD-IMEX by refining the log-SNR
grid.  The explicit and implicit variants have the same surrogate posterior path and
Langevin-intensity schedule; they differ only in the treatment of the linear
likelihood drift.

\begin{table}[!t]
\centering
\caption{NFE refinement of the explicit and implicit likelihood updates. Entries are mean PSNR, SSIM, and LPIPS over 100 images. Within each NFE, the better result between the two updates is bold; \emph{fail} denotes numerical failure on all 100 images. All configurations use the same surrogate posterior path, Langevin-intensity schedule, score network, images, measurements, and random seeds; only the log-SNR grid and the treatment of the linear likelihood drift differ.}
\label{tab:nfe-refinement}
\scriptsize
\setlength{\tabcolsep}{3.6pt}
\resizebox{\textwidth}{!}{%
\begin{tabular}{lc ccc ccc}
\toprule
Task & NFE & \multicolumn{3}{c}{Explicit} & \multicolumn{3}{c}{Implicit}\\
\cmidrule(lr){3-5}\cmidrule(lr){6-8}
& & PSNR$\uparrow$ & SSIM$\uparrow$ & LPIPS$\downarrow$ & PSNR$\uparrow$ & SSIM$\uparrow$ & LPIPS$\downarrow$\\
\midrule
\multirow{4}{*}{\makecell[l]{FFHQ Gaussian\\deblur}} & 100 & \multicolumn{3}{c}{\emph{fail}} & \textbf{29.523} & \textbf{.8472} & \textbf{.2075}\\
 & 200 & 28.698 & .7891 & .2311 & \textbf{29.505} & \textbf{.8470} & \textbf{.2066}\\
 & 500 & \textbf{29.499} & \textbf{.8472} & \textbf{.2058} & 29.489 & .8469 & .2064\\
 & 1000 & \textbf{29.494} & \textbf{.8470} & \textbf{.2063} & 29.486 & .8468 & .2066\\
\midrule
\multirow{4}{*}{\makecell[l]{ImageNet motion\\deblur}} & 100 & 5.434 & .1344 & .9360 & \textbf{27.585} & \textbf{.7786} & \textbf{.1968}\\
 & 200 & 18.342 & .2843 & .5728 & \textbf{27.643} & \textbf{.7841} & \textbf{.2217}\\
 & 500 & \textbf{27.661} & \textbf{.7847} & \textbf{.2284} & 27.638 & .7838 & .2357\\
 & 1000 & \textbf{27.637} & \textbf{.7838} & \textbf{.2375} & 27.627 & .7833 & .2403\\
\bottomrule
\end{tabular}}
\end{table}

The explicit update is unusable on the 100-NFE grid: it fails numerically on
all FFHQ images and attains only $5.43$ dB PSNR, $.1344$ SSIM, and $.9360$
LPIPS on ImageNet. Refinement restores the same limiting behavior as the
implicit update, but at an operator-dependent rate. On FFHQ the explicit method
remains less accurate at 200 NFEs and agrees with the implicit method across all
three metrics by 500 NFEs; on ImageNet it remains under-resolved at 200 NFEs and
recovers only at 500 NFEs. Over the same refinement, the maximum stability
indicator $\max_k\Omega_k\lambda_{\max}(H_{k+1})$ decreases from $7.30$ to
$1.06$ at 500 NFEs and $.50$ at 1000 NFEs on FFHQ; the corresponding values
on ImageNet are $6.70$, $.97$, and $.46$.  The implicit update is
stable throughout. Thus implicit treatment follows the same posterior dynamics on
a substantially coarser grid and hence requires fewer score evaluations.

\paragraph{Innovation placement}
Noiseless centered-box inpainting is treated separately as a qualitative
stress test. Removing a $128\times128$ square leaves a large null space, so
many semantically different faces can agree exactly with the observed pixels.
Pixelwise PSNR and SSIM against one hidden ground truth would penalize plausible
posterior samples merely for choosing a different eye shape, expression, or
facial detail. We therefore use this task only to inspect visual coherence and
sample variability. All methods use the same FFHQ prior, centered mask, and
noiseless observed pixels. For PD-IMEX, we use a quadratic grid, initialize the
observed region from the appropriately noised measurement, and apply no hard
projection. The 100-NFE configuration uses $\tau_c=24$ and $\zeta=24$.

This task also makes the placement of the Langevin innovation especially
relevant. To display the distinction explicitly, write
\[
 d_k:=\widehat x_{0,k}+\varrho_k(z_k-\widehat x_{0,k})+\Omega_kb_{k+1},
 \qquad R_k:=(I+\Omega_kH_{k+1})^{-1}.
\]
Using the same Gaussian draw $\xi_k$, the two variants are
\begin{equation}\label{eq:innovation-placement}
 \begin{aligned}
 z_{k+1}^{\mathrm{inside}}
 &=R_k\bigl(d_k+\sqrt{V_k}\,\xi_k\bigr),\\
 z_{k+1}^{\mathrm{post}}
 &=R_kd_k+\sqrt{V_k}\,\xi_k,
 \qquad \xi_k\sim\mathcal N(0,I).
 \end{aligned}
\end{equation}
Thus, inside-solve placement filters the innovation through the likelihood
resolvent, whereas post-solve placement applies the resolvent only to the
deterministic update. Along a likelihood eigenmode of curvature $\nu$, the local
relaxation time is approximately $(\kappa_\lambda+\chi_\lambda\nu)^{-1}$.
Near the endpoint, the product $\Omega_k\nu$ can remain order one or larger, so
this fast scale is not resolved by the grid. The inside-solve variant then retains
only the fraction $(1+\Omega_k\nu)^{-2}$ of the innovation variance. The
post-solve variant in \eqref{eq:innovation-placement} instead stabilizes the
deterministic likelihood drift without filtering the Langevin innovation.

To isolate this placement effect at finite NFE, the amount of residual refresh
must also remain comparable.  Over step $k$, the fresh-innovation fraction is
$1-\exp(-2\zeta\Delta\lambda_k)$.  Thus, holding $\zeta$ fixed while refining
the grid weakens the refresh per step and confounds innovation placement with a
different practical stochastic regime.  We use
$\zeta=24,48,120,240$ at $100,200,500,1000$ NFEs, respectively, which
approximately preserves the typical per-step refresh strength of the 100-NFE
configuration.

Within each NFE, the inside- and post-solve variants share the initialization
and every Gaussian draw; only the placement of the innovation changes.
\Cref{fig:innovation-refinement} shows that post-solve innovation gives a
coherent completion throughout the tested NFE range, whereas filtering the
same innovation through the inside-solve resolvent leaves the missing region
unresolved. The experiment therefore supports post-solve placement when the
finite-step refresh strength is matched.

\begin{figure}[!ht]
\centering
\includegraphics[width=\textwidth]{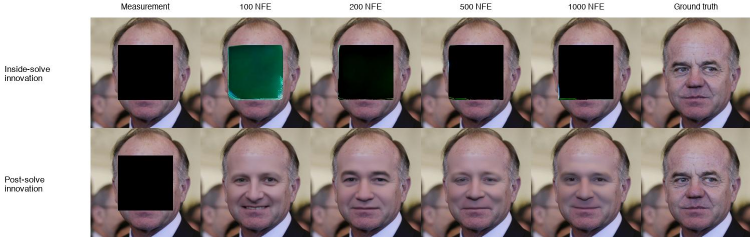}
\caption{Langevin innovation-placement study for noiseless centered-box inpainting. From 100 to 1000 NFEs, we use $\zeta=24,48,120,240$, respectively, to keep the typical per-step residual refresh comparable. Within each column, the inside- and post-solve variants use the same initialization, deterministic implicit update, and Gaussian draws; only the innovation placement changes.}
\label{fig:innovation-refinement}
\end{figure}

\Cref{fig:box-recon} places this controlled study in a broader inverse-problem
comparison.  Since many images are compatible with the same masked
observation, we do not rank the methods by pixelwise proximity to the single
hidden image.  Instead, the figure compares preservation of the observed
region and the generation of coherent, semantically plausible content in the
missing region.

\begin{figure}[H]
\centering
\includegraphics[width=.9\textwidth]{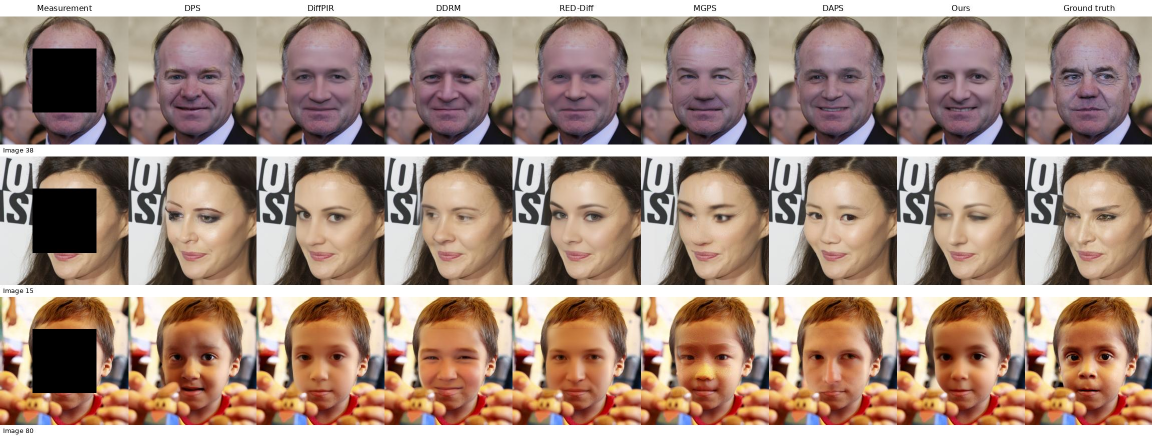}
\caption{Qualitative comparison for noiseless centered-box inpainting. From top to bottom, the rows show images 38, 15, and 80. All methods use the same FFHQ model, centered mask, and noiseless observed pixels. PD-IMEX uses $\zeta=24$, post-solve Langevin innovations, and 100 NFEs. Because the conditional problem is highly nonunique, the ground truth is shown as one compatible latent image rather than as the unique desired completion.}
\label{fig:box-recon}
\end{figure}
\FloatBarrier

\section{Conclusion}
We developed posterior dynamics that use the full multiscale diffusion prior for
linear imaging inverse problems. The key idea is to condition the diffusion prior
across diffusion scales, rather than using the pretrained network only as a
denoising module inside an outer reconstruction procedure. Aligning the likelihood
center with the clean-image coordinate and continuing its covariance for residual
diffusion uncertainty define an explicit surrogate posterior path. The resulting
dynamics combine base multiscale transport with a tunable Langevin component,
providing a common mechanism for posterior
tracking and task-dependent exploration. Endpoint consistency and a finite-horizon
tracking estimate give analytical support for this construction.

PD-IMEX provides an efficient and stable realization of the dynamics. It uses one
score evaluation per diffusion scale, treats the linear likelihood drift
implicitly, and places the Langevin innovation after the data-consistency solve.
Experiments show strong reconstruction quality at 100 score evaluations,
demonstrate that implicit integration preserves coarse-grid efficiency, and show
how the relative Langevin intensity adapts the fidelity--diversity behavior to
different inverse problems. Together, these results provide a direct route from a
pretrained multiscale diffusion prior to a theoretically supported and flexible
posterior solver.

\appendix
\section{Proof of the exact posterior family}\label{app:exact}
\begin{proof}[Proof of \cref{prop:exact-family}]
Let $q_\lambda=e^{-2\lambda}$. Since $\pi_\lambda^y=P_{q_\lambda}p_0(\cdot\mid y)$ and the Gaussian heat semigroup satisfies
\[
 \partial_qP_q\nu=\frac12\Delta P_q\nu,
\]
we have $q_\lambda'=-2q_\lambda$ and therefore
\begin{equation}\label{eq:app-heat}
 \partial_\lambda\pi_\lambda^y=-q_\lambda\Delta\pi_\lambda^y.
\end{equation}
Let $m_\lambda$ be the density of \eqref{eq:exact-posterior-sde}. Its Fokker--Planck equation is
\begin{equation}\label{eq:app-exact-fp}
 \partial_\lambda m_\lambda
 =-\nabla\cdot\left((q_\lambda+a_\lambda)m_\lambda\nabla\log\pi_\lambda^y\right)
 +a_\lambda\Delta m_\lambda.
\end{equation}
Substituting $m_\lambda=\pi_\lambda^y$ and using
$\pi_\lambda^y\nabla\log\pi_\lambda^y=\nabla\pi_\lambda^y$ gives
\[
 -\nabla\cdot\left((q_\lambda+a_\lambda)\nabla\pi_\lambda^y\right)
 +a_\lambda\Delta\pi_\lambda^y
 =-q_\lambda\Delta\pi_\lambda^y
 =\partial_\lambda\pi_\lambda^y.
\]
Thus $\pi_\lambda^y$ solves \eqref{eq:app-exact-fp}. Since the initial law is $\pi_{\lambda_0}^y$, uniqueness of the Fokker--Planck solution implies $m_\lambda=\pi_\lambda^y$ on every compact interval. Finally, $q_\lambda\to0$, and continuity of the Gaussian semigroup at zero gives
$P_{q_\lambda}p_0(\cdot\mid y)\Rightarrow p_0(\cdot\mid y)$.
\end{proof}

\section{Endpoint and score consistency}\label{app:endpoint}
\begin{proof}[Proof of \cref{prop:endpoint}]
Because $p_0\in L^1(\R^d)$ and the Gaussian kernels form an approximate identity,
\begin{equation}\label{eq:app-L1-prior}
 \norm{\pi_\lambda-p_0}_{L^1}\to0.
\end{equation}
Let
\[
 L_\infty(z)=\exp\!\left[-\frac{\norm{Az-y}^2}{2\sigma_d^2}\right].
\]
Since $\Gamma_\lambda^{-1}\to\sigma_d^{-2}I$, we have $L_\lambda(z)\to L_\infty(z)$ pointwise, and $0<L_\lambda,L_\infty\le1$. Write
$f_\lambda=\pi_\lambda L_\lambda$ and $f_\infty=p_0L_\infty$. Then
\begin{align}
 \norm{f_\lambda-f_\infty}_{L^1}
 &\le \norm{\pi_\lambda-p_0}_{L^1}
 +\int\pi_\lambda|L_\lambda-L_\infty|\dd z\notag\\
 &\le 2\norm{\pi_\lambda-p_0}_{L^1}
 +\int p_0|L_\lambda-L_\infty|\dd z\to0,\label{eq:app-unnormalized}
\end{align}
where the final integral converges by dominated convergence. Consequently,
\[
 \mathcal Z_\lambda=\int f_\lambda\dd z\to \mathcal Z_\infty:=\int f_\infty\dd z>0.
\]
For all sufficiently large $\lambda$, $\mathcal Z_\lambda\ge \mathcal Z_\infty/2$, and
\begin{align*}
 \norm{\frac{f_\lambda}{\mathcal Z_\lambda}-\frac{f_\infty}{\mathcal Z_\infty}}_{L^1}
 &\le \frac{1}{\mathcal Z_\lambda}\norm{f_\lambda-f_\infty}_{L^1}
 +\left|\frac1{\mathcal Z_\lambda}-\frac1{\mathcal Z_\infty}\right|\norm{f_\infty}_{L^1}
 \to0.
\end{align*}
Since total variation is one half of the $L^1$ distance between densities, this proves \eqref{eq:endpoint-TV}.

For the score statement, Bayes' rule gives
\[
 \nabla\log p(z\mid y)
 =\nabla\log p_0(z)-\sigma_d^{-2}A^\top(Az-y).
\]
Subtracting this identity from \eqref{eq:target-score} yields
\begin{equation}\label{eq:app-score-diff}
 \sbar_\lambda(z;y)-\nabla\log p(z\mid y)
 =\nabla\log\pi_\lambda(z)-\nabla\log p_0(z)
 -A^\top(\Gamma_\lambda^{-1}-\sigma_d^{-2}I)(Az-y).
\end{equation}
The first term converges locally uniformly by assumption. On every compact set, the second term converges uniformly because $\Gamma_\lambda^{-1}\to\sigma_d^{-2}I$ and $Az-y$ is bounded there.
\end{proof}

\section{Transport defect for the continued likelihood}\label{app:defect}
\begin{proof}[Proof of \cref{prop:explicit-defect}]
Write $f_\lambda=\pi_\lambda$ and
$u_\lambda=f_\lambda L_\lambda$, so that
$\pibar_\lambda=u_\lambda/\int u_\lambda$. Since
$\partial_\lambda f_\lambda=-q_\lambda\Delta f_\lambda$, centering removes
the derivative of the normalizer and gives
\begin{equation}\label{eq:app-centered-defect}
 r_\lambda
 =\mathcal C_{\pibar_\lambda}\!\left[
 \frac{\partial_\lambda u_\lambda+q_\lambda\Delta u_\lambda}
 {u_\lambda}\right].
\end{equation}
The product rule contributes
$2q_\lambda s_\lambda^{\pr}\cdot s_\lambda^{\lik}$. Moreover,
direct differentiation of \eqref{eq:Llambda} yields
\[
 \partial_\lambda\log L_\lambda
 =\frac12\dot\eta_\lambda\norm{s_\lambda^{\lik}}^2,
 \qquad
 \Delta\log L_\lambda=-\tr(H_\lambda).
\]
Consequently,
\[
 \frac{\partial_\lambda L_\lambda+q_\lambda\Delta L_\lambda}{L_\lambda}
 =-q_\lambda\tr(H_\lambda)
 +\left(q_\lambda+\frac12\dot\eta_\lambda\right)
 \norm{s_\lambda^{\lik}}^2.
\]
Substitution into \eqref{eq:app-centered-defect} proves
\eqref{eq:explicit-defect}, because the spatially constant trace term vanishes
under centering.
\end{proof}

\section{Entropy identity and finite-horizon tracking}\label{app:entropy}
\begin{proof}[Proof of \cref{thm:continuous}]
For notational brevity, write $q=q_\lambda$, $a=a_\lambda$, $\pi=\pibar_\lambda$, $\mu=\mu_\lambda$, and $g=\mu/\pi$. The Fokker--Planck equation of \eqref{eq:surrogate-sde} is
\begin{align}
 \partial_\lambda\mu
 &=-\nabla\cdot\bigl((q+a)\mu\nabla\log\pi\bigr)+a\Delta\mu\notag\\
 &=-q\nabla\cdot(\mu\nabla\log\pi)
 +a\nabla\cdot(\mu\nabla\log g).
 \label{eq:app-fp-relative}
\end{align}
The second equality follows from
$\nabla\cdot(\mu\nabla\log g)=\Delta\mu-\nabla\cdot(\mu\nabla\log\pi)$.
By the definition of the defect,
\begin{equation}\label{eq:app-target-evolution}
 \partial_\lambda\pi=-q\Delta\pi+r_\lambda\pi.
\end{equation}
Differentiate
$H(\lambda)=\int\mu\log g\dd z$. Since $\int\partial_\lambda\mu\dd z=0$,
\begin{equation}\label{eq:app-H-derivative}
 H'(\lambda)
 =\int\partial_\lambda\mu\log g\dd z
 -\int\mu\partial_\lambda\log\pi\dd z.
\end{equation}
The Langevin contribution in \eqref{eq:app-fp-relative} is
\begin{equation}\label{eq:app-mixing-term}
 a\int\nabla\cdot(\mu\nabla\log g)\log g\dd z
 =-a\int\norm{\nabla\log g}^2\dd\mu.
\end{equation}
The base transport contribution is
\begin{align}
 q\int\mu\nabla\log\pi\cdot\nabla\log g\dd z
 &=q\int\nabla\pi\cdot\nabla g\dd z\notag\\
 &=-q\int g\Delta\pi\dd z
 =-q\int\frac{\Delta\pi}{\pi}\dd\mu.
 \label{eq:app-transport-cancel}
\end{align}
Meanwhile, \eqref{eq:app-target-evolution} gives
\begin{equation}\label{eq:app-target-term}
 -\int\mu\partial_\lambda\log\pi\dd z
 =q\int\frac{\Delta\pi}{\pi}\dd\mu-\int r_\lambda\dd\mu.
\end{equation}
The first terms in \eqref{eq:app-transport-cancel} and \eqref{eq:app-target-term} cancel, yielding the exact identity \eqref{eq:entropy-identity}.

Apply the log-Sobolev inequality \eqref{eq:LSI} to $f=\sqrt g$. Since $\int g\dd\pi=1$,
\[
 H(\lambda)=\Ent_\pi(g),
 \qquad
 \int\norm{\nabla\sqrt g}^2\dd\pi
 =\frac14I(\mu\Vert\pi),
\]
and hence
\begin{equation}\label{eq:app-LSI-Fisher}
 I(\mu\Vert\pi)\ge2\rho_\lambda H(\lambda).
\end{equation}
Combining \eqref{eq:entropy-identity}, \eqref{eq:defect-control}, and \eqref{eq:app-LSI-Fisher},
\[
 H'(\lambda)
 \le-\bigl(2a_\lambda\rho_\lambda-c_\lambda\bigr)H(\lambda)+\varepsilon_\lambda
 =-k_\lambda H(\lambda)+\varepsilon_\lambda.
\]
The integrating-factor form of Gronwall's inequality gives
\eqref{eq:Gronwall}. If the cumulative contraction diverges and its weighted
defect forcing vanishes, then $H(\lambda)\to0$. Pinsker's inequality, the
triangle inequality, and \cref{prop:endpoint} then yield convergence to the
posterior in total variation.

For completeness, the entropy variational inequality
\[
 \int F\dd\nu\le\KL(\nu\Vert\pi)+\log\int e^F\dd\pi
\]
with $F=-\theta_\lambda r_\lambda$ yields
\eqref{eq:variational-defect} pointwise in $\lambda$.
\end{proof}

\section{Post-solve Langevin innovation and weak error}\label{app:weak}
\begin{proof}[Proof of \cref{prop:covariance}]
Write $R_k=(I+\Omega_kH_{k+1})^{-1}$ and
$\widetilde z_{k+1}=\widehat x_{0,k}+\varrho_k(z_k-\widehat x_{0,k})+
\Omega_kb_{k+1}$. Then
$z_{k+1}=R_k\widetilde z_{k+1}+\sqrt{V_k}\xi_k$, so its conditional
covariance is exactly $V_kI$. If the innovation is instead included in the
implicit solve, the update becomes
$R_k(\widetilde z_{k+1}+\sqrt{V_k}\xi_k)$ and has conditional covariance
$V_kR_k^2$, since $R_k$ is symmetric. This proves the covariance comparison.

On a fixed finite interval, Taylor expansion of the scalar integrals in
\eqref{eq:rho-step}--\eqref{eq:Omega-V} gives, uniformly in $k$,
\begin{equation}\label{eq:app-parameter-expansion}
 \varrho_k=1-\kappa_kh_k+O(h_k^2),\qquad
 \Omega_k=\chi_kh_k+O(h_k^2),\qquad
 V_k=2a_kh_k+O(h_k^2),
\end{equation}
because the exponential factor defining $V_k$ is $1+O(h_k)$ uniformly over
the step. This proves \eqref{eq:first-order-params}.

Tweedie's identity gives
$\widehat x_{0,k}-z=q_ks_k^{\pr}(z)$. Smoothness in $\lambda$ and
\eqref{eq:app-parameter-expansion} yield
\[
 \widetilde z_{k+1}
 =z+\chi_kh_k\bigl(s_k^{\pr}(z)+b_k\bigr)
 +O((1+\norm z)h_k^2),
 \qquad
 R_k=I-\chi_kh_kH_k+O(h_k^2).
\]
Consequently,
\begin{align}
 \E[z_{k+1}-z\mid z_k=z]
 &=(q_k+a_k)\sbar_{\lambda_k}(z;y)h_k
 +O((1+\norm z)h_k^2),\label{eq:app-mean-match}\\
 \operatorname{Cov}(z_{k+1}\mid z_k=z)
 &=2a_kh_kI+O(h_k^2),\label{eq:app-cov-match}
\end{align}
where the first line uses
$\sbar_{\lambda_k}(z;y)=s_k^{\pr}(z)-H_kz+b_k$. These are the first
conditional moments of \eqref{eq:surrogate-sde} to first order.
\end{proof}

\begin{proof}[Proof of \cref{thm:weak}]
Set
\[
 B_\lambda(z)=(q_\lambda+a_\lambda)\sbar_\lambda(z;y),
 \qquad
 \mathcal L_\lambda f
 =B_\lambda\cdot\nabla f+a_\lambda\Delta f.
\]
Fix $k$ and condition on $z_k^{\alg}=z$. The affine Gaussian update can be
written as
\[
 \Delta_k:=z_{k+1}^{\alg}-z=m_k(z)+\sqrt{V_k}\xi_k,
 \qquad \xi_k\sim\mathcal N(0,I),
\]
where \eqref{eq:app-mean-match}--\eqref{eq:app-cov-match} give
\begin{equation}\label{eq:app-first-moment}
 m_k(z)=B_{\lambda_k}(z)h_k+O((1+\norm{z})h_k^2),
 \qquad
 V_k=2a_{\lambda_k}h_k+O(h_k^2).
\end{equation}
The raw second moment, rather than only the covariance, is needed below. Since
$\E[\Delta_k\Delta_k^\top]=V_kI+m_km_k^\top$,
\begin{equation}\label{eq:app-second-moment}
 \E[\Delta_k\Delta_k^\top]
 =2a_{\lambda_k}h_kI
 +O((1+\norm{z}^2)h_k^2).
\end{equation}
The centered Gaussian third moment vanishes. Expanding around $m_k$ and using
\eqref{eq:app-first-moment} therefore gives, for some integer $m$ independent
of the grid,
\begin{equation}\label{eq:app-higher-moments}
 \left\|\E[\Delta_k^{\otimes3}]\right\|
 \le C(1+\norm{z}^m)h_k^2,
 \qquad
 \E\norm{\Delta_k}^4
 \le C(1+\norm{z}^m)h_k^2.
\end{equation}
The first estimate is a signed tensor-moment bound, whereas the second is the
absolute fourth-moment bound required for the Taylor remainder.

For a function $f$ whose spatial derivatives through order four have the stated
polynomial-growth bounds, a third-degree Taylor expansion with a fourth-order
remainder gives
\begin{align*}
 f(z+\Delta_k)
 ={}&f(z)+Df(z)[\Delta_k]
 +\frac12D^2f(z)[\Delta_k,\Delta_k]\\
 &+\frac16D^3f(z)[\Delta_k,\Delta_k,\Delta_k]
 +R_4(z,\Delta_k).
\end{align*}
The remainder satisfies
\[
 |R_4(z,\Delta_k)|
 \le C(1+\norm{z}^m+\norm{\Delta_k}^m)\norm{\Delta_k}^4.
\]
Gaussian moment estimates and \eqref{eq:app-first-moment} therefore bound its
conditional expectation by $C(1+\norm{z}^{m'})h_k^2$ for some $m'$. Taking
conditional expectations in the Taylor expansion and applying
\eqref{eq:app-first-moment}--\eqref{eq:app-higher-moments} yields
\begin{equation}\label{eq:app-numerical-local}
 \widehat Q_kf(z)=f(z)+h_k\mathcal L_{\lambda_k}f(z)
 +O((1+\norm{z}^m)h_k^2).
\end{equation}

We next derive the corresponding expansion for the exact transition. Set
$s=\lambda_k$ and $t=\lambda_{k+1}$, and write $\E_{s,z}$ for expectation
conditional on $Z_s=z$. Dynkin's formula gives
\[
 Q_kf(z)=f(z)+\int_s^t\E_{s,z}
 [\mathcal L_rf(Z_r)]\dd r.
\]
Applying the time-dependent Dynkin formula to
$g(r,x)=\mathcal L_rf(x)$ gives
\[
 \E_{s,z}[\mathcal L_rf(Z_r)]
 =\mathcal L_sf(z)
 +\int_s^r\E_{s,z}
 \left[((\partial_u+\mathcal L_u)\mathcal L_uf)(Z_u)\right]\dd u.
\]
Substitution into the preceding identity yields
\begin{align*}
 Q_kf(z)
 ={}&f(z)+h_k\mathcal L_sf(z)\\
 &+\int_s^t\int_s^r\E_{s,z}
 \left[((\partial_u+\mathcal L_u)\mathcal L_uf)(Z_u)\right]
 \dd u\dd r.
\end{align*}
The coefficient regularity implies that
$((\partial_u+\mathcal L_u)\mathcal L_uf)(x)$ has polynomial growth. The exact
moment bounds and the area $h_k^2/2$ of the integration region then give
\begin{equation}\label{eq:app-exact-local}
 Q_kf(z)=f(z)+h_k\mathcal L_{\lambda_k}f(z)
 +O((1+\norm{z}^m)h_k^2).
\end{equation}
Comparing \eqref{eq:app-numerical-local} and \eqref{eq:app-exact-local} gives
\begin{equation}\label{eq:app-local-difference}
 |(\widehat Q_k-Q_k)f(z)|\le C_f(1+\norm{z}^m)h_k^2.
\end{equation}

Finally, define the backward functions
\[
 u_k(z):=\E[\phi(Z_\Lambda)\mid Z_{\lambda_k}=z].
\]
They satisfy $u_k=Q_ku_{k+1}$ and, by assumption, have spatial derivatives
through order four with polynomial-growth bounds uniform in $k$. Since the
exact and numerical processes have the same initial law, telescoping gives
\begin{align*}
 \E\phi(z_K^{\alg})-\E\phi(Z_\Lambda)
 &=\sum_{k=0}^{K-1}
 \E\left[(\widehat Q_k-Q_k)u_{k+1}(z_k^{\alg})\right].
\end{align*}
Applying \eqref{eq:app-local-difference}, the numerical moment bounds, and
\[
 \sum_{k=0}^{K-1}h_k^2
 \le h\sum_{k=0}^{K-1}h_k
 =h(\Lambda-\lambda_0)
\]
proves \eqref{eq:weak-bound}.
\end{proof}

\section*{Acknowledgments and declarations}
The authors used OpenAI Codex to assist with language editing, \LaTeX{}
preparation, code development, and experimental workflow organization. The
authors assume responsibility for all content.

\end{document}